\newcommand{\TheTitle}{Experimental Design for Non-Parametric Correction of Misspecified Dynamical Models}
\newcommand{\TheShortTitle}{Experimental Design for Correcting Dynamical Models}
\newcommand{\TheAuthors}{G. Shulkind, L. Horesh, and H. Avron}
\headers{\TheShortTitle}{\TheAuthors}
\title{{\TheTitle}\thanks{Submitted to the editors on May 1, 2017.
\funding{G. Shulkind work was partly done while at IBM TJ Watson Research center; H. Avron acknowledges support by the XDATA program of the Defense Advanced Research
Projects Agency (DARPA), administered through Air Force Research Laboratory contract FA8750-
12-C-0323 and of an IBM Faculty Award.}}}
\author{
	Gal Shulkind\thanks{Department of Electrical Engineering and Computer Science, Massachusetts Institute of Technology, Cambridge, MA, USA,  \email{shulkind@mit.edu}.}
	\and
	Lior Horesh\thanks{Quantum Computing, IBM TJ Watson Research Center, Yorktown Heights, NY, USA, (\email{lhoresh@us.ibm.com},
		\email{ibm.biz/lhoresh}).}
	\and
	Haim Avron\thanks{Department of Applied Mathematics, Tel Aviv University, Israel, \email{haimav@post.tau.ac.il}, \url{http://www.math.tau.ac.il/\string~haimav}.}
} 
\begin{document}
	\maketitle
	\begin{abstract}
We consider a class of misspecified dynamical models where the governing term is only approximately known. Under the assumption that observations of the system's evolution are accessible for various initial conditions, our goal is to infer a non-parametric correction
to the misspecified driving term such as to faithfully represent the system dynamics and devise system evolution predictions for unobserved initial conditions.

We model the unknown correction term as a Gaussian Process and analyze the problem of efficient experimental design to find an optimal correction term under constraints such as a limited experimental budget.
We suggest a novel formulation for experimental design for this Gaussian Process and show that approximately optimal (up to a constant factor) designs may be efficiently derived by utilizing results from the literature on submodular optimization.
Our numerical experiments exemplify the effectiveness of these techniques.
\end{abstract} 
	\begin{keywords}
		Model Misspecification, Dynamical Systems, Experimental Design, Submodularity, Gaussian Processes.
	\end{keywords}
	
	\begin{AMS}
		62K05, 37M05, 62G08, 68T05 
	\end{AMS}

	\section{Introduction}
The evolution of a wide variety of dynamical systems can be described by mathematical models which embody differential equations \cite{gear1971numerical}. Such dynamical models are employed ubiquitously for description, prediction, and decision-making under-uncertainty. While the primary role of a mathematical model is to provide a consistent link between the input and output of a system or phenomenon under investigation, multiple considerations are at play when designing a model, involving a series of choices which influence its complexity and realism. These choices represent trade-offs between different competing objectives including model accuracy, robustness, functional complexity, scalability, computational complexity and interpretability. 

Acknowledging that ``essentially all models are wrong'' \cite{box1978bayesian}, a fundamental question is ``what is the desired level of fidelity required by the model?''.  This question cannot be answered in isolation, as often the required level of fidelity cannot be assessed directly, especially when mathematical models are embedded within an end-goal optimization or a decision process. In such circumstances, prominent modeling errors creep into simulation-based insights, ramifications of which could be inaccurate state descriptions, unstable model inferences, or erroneous control output, designs or decisions.  As a guiding principle, uncertainty propagation should be holistically accounted for to ensure that we invest sufficient, yet non-redundant effort into each stage in the information flow value chain \cite{lieberman2013goal}. 


For a broad range of applications, especially in the control space, the modeler's ability to devise a model of higher fidelity than required for the application, enables deliberate compromise of an adequate model in exchange for reduced model complexity, which entails more economic computation. Examples would be model reduction approaches such as proper orthogonal decomposition and discrete empirical interpolation method \cite{Peherstorfer201521,chaturantabut2010nonlinear} as well as multi-fidelity modeling, where routine computation of high fidelity models may be intractable for the underlying task, necessitating the use of low fidelity proxies \cite{alexandrov2001approximation,leary2003knowledge,antoulas2005approximation,rajnarayan2009trading,goh2012prediction,march2012multidelity}. 

Conversely, for a variety of applications the goal is to improve the accuracy of an inadequate model such as to comply with minimum desired fidelity requirements \cite{box2011bayesian,kutoyants2012identification,shalizi2009dynamics}.
One such scenario is in situations where domain knowledge or first principles approach are employed for the description of a  system of interest. Conventionally, in such settings, a human expert derives  
a low-complexity approximate model for description of  the dynamics \cite{Peherstorfer201521,he1997lattice,haber2000fast,virieux2009overview,kutoyants2012identification,soize2008probabilistic,shalizi2009dynamics}. 
Real-world 
phenomena often involve additional, weak effects, that are not accounted for in typical expert derived models, rendering such models inadequate representations of reality. For example, in designing electrical circuits, ideal linear models are often assumed for circuit components such as resistors, capacitors and inductors, however, available components tend to exhibit weak but complicated non-linear characteristics not accounted for by the approximate models \cite{chase2005modeling,martinez2005transformer,fatoorehchi2015analytical}. Another example is in deriving models for flow systems, where idealized models may be assumed for the medium and its boundaries, neglecting weak nonlinear phenomena and deteriorating the fidelity of the resulting models \cite{doi:10.1137/S003614290038296X,Veneziani2005FlowRD}.
In other settings multi-physics coupling effect may not be readily apparent or properly characterized by the modeler \cite{DONEA1982689,doi:10.1063/1.322296,cox2006two}
or multi-dimensional model construction may harness approximated models to accommodate computational limitations \cite{zunino2004multidimensional,blanco2010assessing,tinsley2006multiscale}. Other common sources of model misspecification may be related to simplified representation of the domain geometry 
\cite{tizzard2005generating}, isotropic modeling of anisotropic medium \cite{thomsen1986weak,wei1995comparative,abascal2008use} and so on. Additionally, the model may be misspecified due to either conscious or non-conscious choices made regarding the numerical solution of the underlying system: immature truncation of infinite expansions,
round-off errors, approximate solutions  of linear or non-linear terms, etc \cite{helms1962truncation,jagerman1966bounds,weideman1994computation,trefethen2013approximation}.  


In lieu of deriving an approximate model which 
may confer an inadequate representation of the system's dynamic, a common alternative is to take a completely agnostic, data-driven approach 
and apply
either parametric 
or non-parametric techniques to  learn the dynamics purely based on empirical data collected from the system \cite{hastie2009elements,mcgoff2015statistical,Peherstorfer201521,mcgoff2015statistical}. 
However, 
such an agnostic approach may entail models with  several potential shortcomings \cite{horesh2015tnobel,lam2016copper}:
\begin{itemize}
	\item Failure to utilize crucial 
prior knowledge regarding the system and its dynamics
\item Relies on the availability of a large set of training examples to derive complex models of sufficient fidelity 
\item Poor generalization performance for out-of-sample instances 
\item Limited means for interpretability due to the agnosticism of the underlying functional form 
\end{itemize}

In this study we explore a third approach of symbiotizing 
 these two information sources effectively: on the one hand a crude misspecified system model as derived based on domain knowledge, and on the other hand empirical measurements and data to complement the misspecified model. Our goal is to learn a 
generalized, non-parametric %
representation for the system dynamics, based on the approximate model and the empirical data. We focus on understanding how this learning process can be performed efficiently, with only a limited budget for experiments to probe the system and collect empirical data points \cite{haber2008numerical,horesh2010optimal,tenorio2013experimental}. Specifically, we explore the role of the initial approximate model in guiding the design of experiments for collection of empirical data that best informs the model correction objective.

The choice between parametric and non-parametric representation of the correction term depends upon the knowledge available to the modeler and in particular, how well the functional class of the correction model is fully understood. When an explicit parametric  representation of the correction term is known, the model correction problem reduces to meta-level parametric estimation problem  \cite{blight1975bayesian,shalizi2009dynamics,hao2014nuclear}. 
In more general settings, where various functional representations (or combinations of which) from a function class may comprise the correction term, a non-parametric approach may be more appropriate \cite{kennedy2001bayesian}. 
The non-parametric option requires weaker, implicit assumptions regarding the desired correction, and therefore applicable to a broader class of problems. We have chosen to focus on that case in the current study.  

Non-parametric formulations offer great versatility in defining non-linear functional representations\cite{aronszajn1950theory,micchelli2006universal,berlinet2011reproducing}, as well as scalable means for their learning \cite{avron2016high,avron2016krr}. Thus, in this study, we take the approach proposed by Kennedy and O'Hagen \cite{kennedy2001bayesian} and articulate the misspecified function in terms of Gaussian Processes. 
Following representation of the misspecified term by GPs, and relying upon the representer theorem \cite{scholkopf2001generalized}, the model correction learning problem is fully determined. 

The end-goal, however, is to accelerate the learning curve, and infer the correction, with minimal number of observations. In this study, we consider a Bayesian {\it D-optimal} experimental design \cite{fedorov1972theory,pukelsheim1993optimal} where we maximize the information gain through  collection of informative data. Following the work of Krause and Golovin \cite{krause2012submodular} we prove that Mutual Information can be regarded as a monotonic sub-modular function in our settings. Based upon this observation, we gain access to the wealth of optimization machinery available for optimization of sub-modular set functions \cite{minoux1978accelerated,nemhauser1978analysis,calinescu2007maximizing,oxley2006matroid}, and thereby provide solid performance guarantees

\paragraph{Study structure} The paper is structured as follows. In Section \ref{sec:problem_formulation} we formulate the problem of misspecified dynamical systems. In Section \ref{sec:Correction_term} we briefly review the Gaussian Process (GP) formulation and its application in concise representation of correction terms. Section \ref{sec:experimental_design} studies experimental design in the context of misspecified models. 
In Section \ref{sec:numerical_experiments}  we perform numerical experiments to validate and demonstrate our results, and in Section \ref{sec:conclusions-and-future-extensions} we conclude the work. 
	\section{Problem Formulation} \label{sec:problem_formulation}
The behavior of a broad variety of dynamical models can be described by Ordinary Differential Equations (ODE) \cite{gear1971numerical}, thus we consider a misspecified system of first order ODEs (higher order systems may be converted into first order form by the usual techniques):
\begin{align} \label{eq:mis_specified_model}
\frac{d}{dt}\textbf{y}(t)=\textbf{G}(\textbf{y}(t))+\textbf{F}(\textbf{y}(t))
\end{align}
with $t$ time, $\textbf{y}(t)=[y_1(t),\ldots,y_d(t)]^{\top}$ a vector signal of interest, and $\textbf{F}(\textbf{y}(t))=[F_1(\textbf{y}(t)),\ldots, F_d(\textbf{y}(t))]^{\top}, \textbf{G}(\textbf{y}(t))=[G_1(\textbf{y}(t)),\ldots, G_d(\textbf{y}(t))]^{\top}$ vector valued functions, $\textbf{F}(\cdot), \textbf{G}(\cdot):\mathbb{R}^d\rightarrow\mathbb{R}^d$ governing the system dynamics.

We are interested in settings where the temporal evolution of $\textbf{y}(t)$ is dominated by the component $\textbf{G}(\cdot)$, whereas the correction term $\textbf{F}(\cdot)$ is assumed to have only a small effect over short time spans. Concretely, define the auxiliary system
\begin{align} \label{eq:approx_sys_model}
\frac{d}{dt}\textbf{y}_G(t)=\textbf{G}(\textbf{y}_G(t))
\end{align}
then our interest is in the regime where, initialized in the same state $\textbf{y}(0)=\textbf{y}_G(0)$ the two systems track each other closely over some prescribed time span $t_f$. Specifically, we assume that for all  $t\in [0,t_f]$ we have $\|y(t){-}y_G(t)\|{\leq}\delta$. One way of ensuring this is requiring $\|\textbf{F}(\cdot)\|\ll\|\textbf{G}(\cdot)\|$ over some domain $\mathcal{D}\subseteq\mathbb{R}^d$ , and short enough time spans $t_f$. The model \eqref{eq:mis_specified_model} is misspecified in the sense that $\textbf{G}(\cdot)$ is assumed known, whereas the small additive correction function $\textbf{F}(\cdot)$ is not available. Situations like this may arise, e.g. when we have at our disposal some approximation $\textbf{G}(\cdot)$ to a system of interest, perhaps derived via expert domain knowledge, which does not fully capture the true dynamics driving the system.

In this paper, our goal is to utilize system evolution paths $\textbf{y}(t)$ as observed in experiments to learn a representation for the correction term $\textbf{F}(\cdot)$. The resulting 'corrected' model allows making accurate predictions about the system evolution. We specifically focus on designing efficient experiments that facilitate rapid learning of the correction term under a limited experimental budget.
\subsection{Initial Conditions}
We consider applications where we are at liberty to perform a limited number of at most $K$ experiments to facilitate learning the correction term $\textbf{F}(\cdot)$. The $\textit{k}^{\rm{th}}$ experiment entails preparing the system at some fixed initial conditions at time zero $\textbf{y}^{(k)}(0)\in\mathcal{Y}$ and observing its subsequent evolution $\textbf{y}^{(k)}(t)$ for $t>0$ as determined by the (not fully known) model
\eqref{eq:mis_specified_model}  (fixing initial conditions at time $t=t_0$ the output of the first order ODE system \eqref{eq:mis_specified_model} is determined for all $t>t_0$ \cite{gear1971numerical}). We take the set $\mathcal{Y}$ to be a finite collection of possible experimental conditions that we may choose to start the system from. It may be a finely discretized grid over a continuous region of accessible initial conditions, e.g. expressing power constraints $\mathcal{Y}\subset\left\lbrace\textbf{y} : \|\textbf{y}\|_2^2\leq P\right\rbrace$, or otherwise meeting an application specific set of restrictions.

Let $\mathcal{Y}_0\subseteq \mathcal{Y}, \vert \mathcal{Y}_0\vert\leq K$ be the set of selected initial conditions that seed the $K$ experiments.
Informative prescription of 
$\mathcal{Y}_0$ is a primary concern in this study, as in many practical scenarios experiments are costly and it is important to design them carefully in order to extract as much information as possible from the limited set of measurements.

\subsection{An Observation Model} \label{sec:obs_model}
The empirical evolution data $\textbf{y}^{(k)}(t),\, k=1,\ldots,K$ allows us to probe the system dynamics and learn a representation for the correction term. To set the framework we specify a discrete and noisy observation model.

In this work we assume readings are collected on a discrete time grid. As the $\textit{k}^{\rm{th}}$ experiment unfolds the system evolves from the initial state $\textbf{y}^{(k)}(0)\in \mathcal{Y}_0$ according to $\textbf{y}^{(k)}(t)$ and we gain access to $T$ temporal observations on a discrete time grid $t\in\mathcal{T}=\lbrace t_1,\ldots ,t_T\rbrace$. Let $\mathcal{Y}_m\equiv\left\lbrace
\textbf{y} \vert \exists k,i \enspace \text{s.t.}\enspace \textbf{y}=\textbf{y}^{(k)}(t_i)\right\rbrace$ be the set of size $\tilde{K}\equiv|\mathcal{Y}_m|=KT$ of system states recorded during the $K$ experiments seeded by states in $\mathcal{Y}_0$.

For a given trajectory $\textbf{y}(t)$, it is apparent from \eqref{eq:mis_specified_model} that the correction term $\textbf{F}(\cdot)$ can be evaluated at points along the path via
\begin{align} \label{eq:data_collection}
\textbf{F}(\textbf{y}(t))=\frac{d}{dt}\textbf{y}(t)-\textbf{G}(\textbf{y}(t))
\end{align}

With oracle access to the derivative $\frac{d}{dt}\textbf{y}(t)$ we could attain point samples of $\textbf{F}(\textbf{y})$ for all $\textbf{y}\in \mathcal{Y}_m$ through \eqref{eq:data_collection} as $\textbf{G}(\textbf{y})$ is assumed known. However, with only discrete samples on the trajectory we do not have access to $\frac{d}{dt}\textbf{y}(t)$. Instead, we assume access to noisy derivative estimates $\frac{d}{dt}\tilde{\textbf{y}}^{(k)}(t_i)$  (e.g. by simple numerical differences, or more advanced techniques performing smoothing over the trajectory \cite{conn2009introduction,trefethen2013approximation} 
 according to:
\begin{align}
\frac{d}{dt}\tilde{\textbf{y}}^{(k)}(t_i)=\frac{d}{dt}\textbf{y}^{(k)}(t_i)+\boldsymbol{\epsilon}^{k,i} \qquad k=1,\ldots,K,\quad i=1,\ldots,T
\end{align}
with $\boldsymbol{\epsilon}^{k,i}\sim\mathcal{N}(0,\boldsymbol{\Sigma}_{\epsilon})$ i.i.d. Gaussian noise. We form noisy estimates for the correction term $\tilde{\textbf{F}}(\textbf{y}^{(k)}(t_i))$ by substituting:
\begin{align} \label{eq:noisy_meas}
	\tilde{\textbf{F}}(\textbf{y}^{(k)}(t_i))&\equiv\frac{d}{dt}\tilde{\textbf{y}}^{(k)}(t_i)-G(\textbf{y}^{(k)}(t_i))=\frac{d}{dt}\textbf{y}^{(k)}(t_i)-G(\textbf{y}^{(k)}(t_i))+\boldsymbol{\epsilon}^{k,i}\notag\\
	&={\textbf{F}}(\textbf{y}^{(k)}(t_i))+\boldsymbol{\epsilon}^{k,i}
\end{align}

For the sequel, we sometimes ease notations by writing $\textbf{f}^j\equiv\textbf{F}(\textbf{y}^j)$, and $\tilde{\textbf{f}}^j\equiv\tilde{\textbf{F}}(\textbf{y}^j)$ for the noisy readings $\textbf{y}^j\in \mathcal{Y}_m, \quad j=1,\ldots,\tilde{K}$. In these symbols the noisy measurement model \eqref{eq:noisy_meas} reads:
\begin{align} \label{eq:noisy_measurements}
	\tilde {\textbf{f}}^j=\textbf{F}(\textbf{y}^j)+\boldsymbol{\epsilon}^j \qquad j=1,\ldots,\tilde{K}
\end{align}
and $\boldsymbol{\epsilon}^j \sim \mathcal{N}(0,\boldsymbol{\Sigma}_{\epsilon})$ are Gaussian i.i.d. 
	\section{Correction Estimation} \label{sec:Correction_term}
The experimental framework detailed in the last section resulted in a set of $\tilde{K}$ noisy point estimates for the correction term ${\tilde{\textbf{F}}}(\mathcal{Y}_m)=\left\lbrace \tilde{\textbf{F}}(\textbf{y}) \vert \textbf{y} \in \mathcal{Y}_m \right\rbrace $ which form our training set. Our interest lies in estimating $\textbf{F}(\cdot)$ over some domain $\mathcal{D}\subseteq\mathbb{R}^d$, however even in the noiseless setting and in the limit where the sampling interval approaches zero, we generally cannot achieve a dense cover over $\mathcal{D}$ with a finite number of trajectories $\textbf{y}^{(k)}(t)$. Thus, some structure or prior information must be assumed for the correction term, such as degree of smoothness or adherence to a specific functional form, to allow for its estimation from the collected data.

In this section we take a Bayesian approach, setting a Gaussian Process (GP) formulation for the problem \cite{kennedy2000predicting}, allowing to express prior knowledge over the correction term $\textbf{F}(\cdot)$ and enabling inference from the finite number of collected noisy samples to the underlying values over the entire domain $\mathcal{D}$. The estimated correction term may subsequently be used to make evolution predictions for arbitrary initial conditions.

\subsection{Gaussian Processes} \label{sec:Gaussian_Processes}
To correct the ODE model we assume a probabilistic setting in which  $\textbf{F}(\textbf{y})$ is a vector-valued GP ${\textbf{F}(\textbf{y})\sim\mathcal{GP}(\textbf{m}(\textbf{y}),\textbf{k}(\textbf{y},\textbf{y}'))}$ defined over some bounded region $\mathcal{D}\subseteq \mathbb{R}^d$ with $\textbf{m}(\cdot):\mathbb{R}^d\rightarrow\mathbb{R}^d$ the mean function and $\textbf{k}(\cdot,\cdot): \mathbb{R}^d\times\mathbb{R}^d\rightarrow \mathbb{R}^{d\times d}$ the covariance function \cite{rasmussen2006gaussian}. Every finite collection of sample points $\left\lbrace \textbf{F}(\textbf{y}^1),\textbf{F}(\textbf{y}^2),\ldots \right\rbrace $ is then distributed as multivariate Gaussian. The mean vector is retrieved by stacking $\textbf{m}(\textbf{y}^1),\textbf{m}(\textbf{y}^2),\ldots$ and the second order statistics are given according to $\mathbb{E}[[\textbf{F}(\textbf{y}^i)]_m[\textbf{F}(\textbf{y}^j)]_n]=[\textbf{k}(\textbf{y}^i,\textbf{y}^j)]_{m,n}$ \cite{alvarez2012kernels}. For the sequel we make the simplifying assumptions $\textbf{m}(\textbf{y})\equiv0$ and $\textbf{k}(\textbf{y},\textbf{y}')=k(\textbf{y},\textbf{y}')\textbf{I}_{d}$, i.e. the vector components are zero mean, independent and share a common scalar kernel function, as in the usual scalar-valued GP setting. Our techniques and methods can be generalized to the biased and correlated-components setting, but we restrict our model here for brevity.

Let $\tilde{\textbf{F}}(\mathcal{A})$ be a set of noisy measurements collected at some set of sampling points $\mathcal{A}$: $\tilde{\textbf{F}}(\mathcal{A})=\left\lbrace \textbf{F}(\textbf{y})+\boldsymbol\epsilon\vert \textbf{y}\in \mathcal{A} \right\rbrace $ where $\boldsymbol\epsilon\sim \mathcal{N}(0,\boldsymbol\Sigma_\epsilon)$ is i.i.d. additive noise. We are interested in predicting the value of the process in unobserved locations. The posterior for $\textbf{F}(\mathcal B)=\left\lbrace \textbf{F}(\textbf{y})\vert \textbf{y}\in \mathcal{B}\right\rbrace$ where $\mathcal{B}$ is some arbitrary set of sampling points is given according to $\textbf{F}(\mathcal B)\vert \tilde{\textbf{F}}(\mathcal{A})\sim\mathcal{N}(\boldsymbol\mu_{\mathcal B\vert \mathcal A},\boldsymbol\Sigma_{\mathcal B\vert \mathcal A})$ with \cite{alvarez2012kernels}:
\begin{align}
\boldsymbol\mu_{\mathcal B\vert \mathcal A}&{=}\textbf{k}(\mathcal{B},\mathcal{A})[\textbf{k}(\mathcal{A},\mathcal{A})+\boldsymbol\Sigma]^{-1}\tilde{\textbf{F}}(\mathcal{A}) \label{eq:gaussian_inference}\\
\boldsymbol\Sigma_{\mathcal B\vert \mathcal A}&{=}\textbf{k}(\mathcal{B},\mathcal{B}){-}\textbf{k}(\mathcal{B},\mathcal{A})[\textbf{k}(\mathcal{A},\mathcal{A}){+}\boldsymbol\Sigma]^{-1}\textbf{k}(\mathcal{A},\mathcal{B})
\end{align}
and $\textbf{k}(\mathcal{S}_1,\mathcal{S}_2)\in\mathbb{R}^{\vert \mathcal{S}_1\vert d\times \vert \mathcal{S}_2\vert d}$ has block structure with elements $[\textbf{k}(\textbf{y}^i,\textbf{y}^j)]_{mn}$ for all $\textbf{y}^i\in \mathcal{S}_1, \textbf{y}^j\in \mathcal{S}_2$ and $m,n=1,\ldots,d$ and $\boldsymbol\Sigma=\boldsymbol\Sigma_\epsilon\otimes \textbf{I}_{\vert \mathcal{A}\vert}$.

The GP formalism facilitates expression of prior knowledge over unknown functions $\textbf{F}(\cdot)$, as determined by the choice of kernel, capturing notions of similarity between values at different positions. Popular choices for the kernel function include the Gaussian RBF $k(\textbf{y},\textbf{y}')=\exp({-}\frac{1}{2\sigma_k^2}\|\textbf{y}{-}\textbf{y}'\|^2)$ with $\sigma_k^2$ the kernel bandwidth and the polynomial kernel $k(\textbf{y},\textbf{y}')=(1+\langle\textbf{y},\textbf{y}'\rangle)^m$ with $m\in\mathbb{N}^{+}$ the order. The Gaussian RBF kernel is of particular interest as it is universal in the sense that with a large enough training set, estimation according to \eqref{eq:gaussian_inference} can approximate any continuous bounded function on a compact domain\cite{micchelli2006universal}. With the GP model set, the value of $\textbf{F}(\textbf{y})$ at any $\textbf{y}\in\mathcal{D}$ may be estimated according to \eqref{eq:gaussian_inference} based on the noisy measurements ${\tilde{\textbf{F}}}(\mathcal{Y}_m)$.

\subsection{Feature Space Representation} \label{sec:feature_space_rep} 
With the assumptions of the last subsection, the $d$-dimensional vector-valued GP $\textbf{F}(\textbf{y})$ is comprised of $d$ independent GPs ${F_i(\textbf{y})\sim\mathcal{GP}(0,k(\textbf{y},\textbf{y}'))},\, i=1,\ldots,d$. We follow \cite{williams1998prediction,rasmussen2006gaussian,de2013extension} and review the correspondence between these GPs and equivalent linear regression models in the feature space.

Indeed, Mercer's theorem guarantees the existence of a sequence of eigenfunctions $\left\lbrace \phi_j(\textbf{y})\right\rbrace, j=1,2,\ldots $ such that $k(\textbf{y},\textbf{y}')=\sum\nolimits_j\phi_j(\textbf{y})\phi_j(\textbf{y}')=\langle\boldsymbol{\phi}(\textbf{y}),\boldsymbol{\phi}(\textbf{y}') \rangle$ where $\boldsymbol{\phi}(\textbf{y})=[\phi_1(\textbf{y}),\phi_2(\textbf{y}),\ldots]^\top$ is the feature transformation from the input space to the feature space and $\left\langle\cdot ; \cdot \right\rangle$ is an inner product.

Let $\theta_{ij}\sim \mathcal{N}(0,1),\, i=1,\ldots,d,\, j=1,2,\ldots$ be a sequence of i.i.d. standard Gaussian variables. For notational convenience we define $\boldsymbol{\theta}_i\equiv[\theta_{i1},\theta_{i2},\ldots ]^{\top},\, i=1,\ldots,d$ and $\boldsymbol{\Theta}=[\boldsymbol{\theta}_1,\ldots,\boldsymbol{\theta}_d]^{\top}$.
We will see that the following identity holds in distribution:
\begin{align} \label{eq:regression_interpretation}
F_i(\textbf{y})=\sum\nolimits_j\theta_{ij}\phi_j(\textbf{y})   \equiv\langle\boldsymbol{\theta}_i,\boldsymbol{\phi}(\textbf{y})\rangle \qquad i=1,\ldots,d
\end{align}
i.e. the GP inference of section \ref{sec:Gaussian_Processes} is equivalent to a Bayesian linear regression model in the feature space. 

To see that \eqref{eq:regression_interpretation} holds notice that both sides of the equality are zeros mean GPs over $\textbf{y}$. The covariance function of the left hand term is $k(\textbf{y},\textbf{y}')$ by definition. The covariance function of the right hand term is 
\begin{align}
\mathbb{E}\left[\sum\nolimits_j\theta_{ij}\phi_j(\textbf{y}) \sum\nolimits_{j'}\theta_{ij'}\phi_{j'}(\textbf{y}') \right]=\sum\nolimits_{jj'}\mathbb{E}\left[\theta_{ij}\theta_{ij'} \right]\phi_j(\textbf{y})\phi_{j'}(\textbf{y}')
=\sum\nolimits_{j}\phi_j(\textbf{y})\phi_{j}(\textbf{y}')=k(\textbf{y},\textbf{y}')
\end{align}

Given noisy data $\tilde{\textbf{F}}(\mathcal{Y}_m)=\textbf{F}(\mathcal{Y}_m)+\boldsymbol\epsilon$ with $\boldsymbol\epsilon \sim \mathcal{N}(0,\boldsymbol\Sigma_{\epsilon})$ i.i.d. noise, inference in the GP can be equivalently performed by estimating 
the regression coefficients $\boldsymbol{\Theta}$ and making predictions for $\textbf{F}(\textbf{y})$ as per \eqref{eq:regression_interpretation}.
	\section{Experimental Design} \label{sec:experimental_design}
In Section \ref{sec:Correction_term} we reviewed inference in a GP setting, and suggested applying this formulation for estimating the correction term $\textbf{F}(\cdot)$ based on the set of noisy measurements $\tilde{\textbf{F}}(\mathcal{Y}_m)$. The observation set $\mathcal{Y}_m$, as determined by the initial conditions set $\mathcal{Y}_0$ was assumed given, and not under our control.

In this section we study efficient experimental design in the misspecified context. That is, our goal would be to select an informative set of experiments, parametrized through the initial conditions $\mathcal{Y}_0$, such as to facilitate rapid learning of the correction term $\textbf{F}(\cdot)$ under a limited experimental budget constraint. We quantify the expected utility associated with choosing sets of initial conditions and suggest an efficient near-optimal (up to a constant factor) algorithm for choosing the best such experimental setup.

\subsection{D-Bayes Optimality} \label{sec:d_bayes} 
Estimation of $\textbf{F}(\textbf{y})$ for generic $\textbf{y}\in \mathcal{D}$ is made possible through the collection of noisy samples $\tilde{\textbf{F}}(\mathcal{Y}_m)$ and application of the methodology of Section \ref{sec:Correction_term}. The quality of inference strongly depends on the sampling set $\mathcal{Y}_m$. For example, if the set $\mathcal{Y}_m$ is highly localized in some region in $\mathcal{D}$ it is reasonable to expect that inference of $\textbf{F}(\cdot)$ becomes more accurate there on the expense of farther locals in $\mathcal{D}$. We are generally interested in estimating $\textbf{F}(\cdot)$ over the whole of $\mathcal{D}$ and so we are interested in developing a mechanism that allows this.

Invoking the feature space representation of Section \ref{sec:feature_space_rep} we see that performing inference in the GP based on a ground set of noisy measurements $\tilde{\textbf{F}}(\mathcal{Y}_m)$ may be viewed as first estimating $\boldsymbol \Theta$ and then applying \eqref{eq:regression_interpretation} to retrieve estimates for the rest of $\mathcal{D}$. From this viewpoint, the estimation error in $\textbf{F}(\textbf{y})$ originates from the error in $\boldsymbol \Theta$ and so our goal is to decrease these as much as possible by maximizing the quality of inference from $\tilde{\textbf{F}}(\mathcal{Y}_m)$ to $\boldsymbol \Theta$. Various statistical criteria have been developed for quantifying the quality of inference between observations and underlying random variables \cite{chaloner1995bayesian,box2011bayesian,fedorov1972theory,pukelsheim1993optimal}. Here we follow D-Bayes optimality~\cite{bernardo1979expected}.

In this framework, the uncertainty associated with $\boldsymbol {\Theta}$ is quantified through the Shannon entropy $H(\cdot)$. Before the experiment we have initial uncertainty $H(\boldsymbol {\Theta})$ which is revised to $H(\boldsymbol {\Theta}|\tilde{\textbf{F}}(\mathcal{Y}_m))$ following data collection. A D-Bayes optimal design minimizes the posterior uncertainty $H(\boldsymbol {\Theta}|\tilde{\textbf{F}}(\mathcal{Y}_m))$, or equivalently maximizes the mutual information:
\begin{align}
I(\boldsymbol {\Theta};\tilde{\textbf{F}}(\mathcal{Y}_m))\equiv H(\boldsymbol {\Theta})-H(\boldsymbol {\Theta}|\tilde{\textbf{F}}(\mathcal{Y}_m))
\end{align}
In our setting we select a set of initial conditions $\mathcal{Y}_0$ and observe the corresponding outputs. This chain of dependencies is made explicit as $\mathcal{Y}_0\rightarrow \mathcal{Y}_m(\mathcal{Y}_0)\rightarrow \tilde{\textbf{F}}(\mathcal{Y}_m(\mathcal{Y}_0))$. The quality of inference, viewed as a function of the initial conditional $\mathcal{Y}_0$ is given by
\begin{align} \label{eq:accurate_cost_function}
G(\mathcal{Y}_0)\equiv I(\boldsymbol {\Theta};\tilde{\textbf{F}}(\mathcal{Y}_m(\mathcal{Y}_0)))
\end{align}
and an optimal experimental design under the budget constraint $\vert\mathcal{Y}_0\vert\leq K, \,\mathcal{Y}_0 \subseteq \mathcal{Y}$ is
\begin{align} \label{eq:opt_prob_exp_des}
\mathcal{Y}_0^\star= \underset{\mathcal{Y}_0: |\mathcal{Y}_0|\leq K, \mathcal{Y}_0\subseteq \mathcal{Y}}{\text{argmax}} G(\mathcal{Y}_0)
\end{align}

\subsection{Output Trajectory Proxy} The design problem \eqref{eq:opt_prob_exp_des} entails choosing a set $\mathcal{Y}_0$ of $K$ initial conditions, and observing $\tilde{K}$ noisy measurements $\tilde{\textbf{F}}(\mathcal{Y}_m(\mathcal{Y}_0))$, which are utilized for estimating $\textbf{F}(\cdot)$ over $\mathcal{D}$.

As we are concerned with misspecified systems such that the complete system model \eqref{eq:mis_specified_model} is unknown, we are unable to predict system trajectories based on initial conditions at time zero. In particular, we do not have a-priori access to the mapping between the sets $\mathcal{Y}_0$ and $\mathcal{Y}_m$, such that evaluation of the cost function \eqref{eq:accurate_cost_function} and thus solution of the design problem \eqref{eq:opt_prob_exp_des} are not possible. However, at this point our assumption that the system is only slightly misspecified in short time spans, i.e. that the correction term $\textbf{F}(\cdot)$ introduces a small effect on the trajectory, turns out to be useful in retrieving approximate solutions.
 
For any given set of initial conditions $\mathcal{Y}_0$ we invoke the approximate system model \eqref{eq:approx_sys_model} to obtain a proxy $\mathcal{Y}_g$ for the true set of future states $\mathcal{Y}_m$. Let $\textbf{y}^{(k)}(0)\in\mathcal{Y}_0$ be the initial conditions seeding the $\textit{k}^{\rm{th}}$ experiment, and designate the approximate ensuing trajectory $\textbf{y}_G^{(k)}(t)$. Collect the approximate trajectories in  $\mathcal{Y}_g\equiv\left\lbrace 
\textbf{y} \vert \exists k,i \enspace \text{s.t.}\enspace \textbf{y}=\textbf{y}_G^{(k)}(t_i)\right\rbrace$, and note that the set $\mathcal{Y}_g$ may be evaluated in advance given $\mathcal{Y}_0$. For example, for a linear misspecified system $\frac{d}{dt}\textbf{y}_G(t)=\textbf{A}\textbf{y}_G(t)$ for some fixed $\textbf{A}\in\mathbb{R}^{d\times d}$, the trajectories comprising $\mathcal{Y}_g$ may be determined according to $\textbf{y}_G^{(k)}(t_i)=e^{\textbf{A}t_i}\textbf{y}_G^{(k)}(0)$ where $e^{(\cdot)}$ is the matrix exponential according to the usual definition.

In what follows we propose a proxy for the cost function \eqref{eq:accurate_cost_function} where $\mathcal{Y}_g$ is used in lieu of the unknown $\mathcal{Y}_m$, and derive approximation bounds for the discrepancy between the two. We show that that these bounds scale with the deviation between the actual and approximate system outputs $\textbf{y}(\cdot)$ and $\textbf{y}_G(\cdot)$. Specifically, we have:
\begin{theorem} \label{thm:1}
Let $\tilde{G}(\mathcal{Y}_0)\equiv I(\boldsymbol {\Theta};\tilde{\textbf{F}}(\mathcal{Y}_g(\mathcal{Y}_0)))$, with $\boldsymbol\epsilon \sim \mathcal{N}(0,\boldsymbol\Sigma_{\epsilon})$, and assume that the maximum covariance discrepancy between the true and approximate models is bounded such that 
$$\forall k_1,k_2,i_1,i_2: \left\lvert k(\textbf{y}^{(k_1)}(t_{i_1}),\textbf{y}^{(k_2)}(t_{i_2}))-k(\textbf{y}_G^{(k_1)}(t_{i_1}),\textbf{y}_G^{(k_2)}(t_{i_2})) \right\rvert\leq \delta\,.$$ We have
\begin{align} \label{eq:thm_bound}
\left\lvert \tilde{G}(\mathcal{Y}_0)-G(\mathcal{Y}_0) \right\rvert \leq- d\tilde{K}\log\left(1{-}\frac{\delta {(d\tilde{K})}^{\frac32}}{\sigma_{\min}({\boldsymbol \Sigma_\epsilon})}\right)
\end{align}	
with $\sigma_{\min}(\cdot)$ the minimal singular value.
\end{theorem}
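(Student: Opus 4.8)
The plan is to reduce both $G(\mathcal{Y}_0)$ and $\tilde{G}(\mathcal{Y}_0)$ to log-determinants and then control their difference through an eigenvalue perturbation argument driven by the entrywise bound $\delta$. First I would record the closed form of the mutual information: since, given $\boldsymbol{\Theta}$, the values $F_i(\mathbf{y})=\langle\boldsymbol{\theta}_i,\boldsymbol{\phi}(\mathbf{y})\rangle$ are deterministic, the observation $\tilde{\mathbf{F}}(\mathcal{A})=\mathbf{F}(\mathcal{A})+\boldsymbol{\epsilon}$ is jointly Gaussian with conditional law $\tilde{\mathbf{F}}(\mathcal{A})\mid\boldsymbol{\Theta}\sim\mathcal{N}(\cdot,\boldsymbol{\Sigma})$ and marginal $\tilde{\mathbf{F}}(\mathcal{A})\sim\mathcal{N}(0,\mathbf{k}(\mathcal{A},\mathcal{A})+\boldsymbol{\Sigma})$. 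Subtracting the two Gaussian entropies gives
\begin{align}
I(\boldsymbol{\Theta};\tilde{\mathbf{F}}(\mathcal{A}))=\frac{1}{2}\log\det\left(\mathbf{I}+\boldsymbol{\Sigma}^{-1}\mathbf{k}(\mathcal{A},\mathcal{A})\right).
\end{align}
Writing $\mathbf{K}\equiv\mathbf{k}(\mathcal{Y}_m,\mathcal{Y}_m)$ and $\tilde{\mathbf{K}}\equiv\mathbf{k}(\mathcal{Y}_g,\mathcal{Y}_g)$, both of size $d\tilde{K}\times d\tilde{K}$ and positive semidefinite (being kernel Gram matrices), this yields $G=\frac{1}{2}\log\det(\mathbf{I}+\boldsymbol{\Sigma}^{-1}\mathbf{K})$ and $\tilde{G}=\frac{1}{2}\log\det(\mathbf{I}+\boldsymbol{\Sigma}^{-1}\tilde{\mathbf{K}})$, so the statement becomes a bound on a difference of log-determinants whose generating matrices differ by the entrywise-$\delta$-bounded perturbation $\mathbf{E}\equiv\mathbf{K}-\tilde{\mathbf{K}}$.

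Next I would pass to eigenvalues. Conjugating by $\boldsymbol{\Sigma}^{-1/2}$ leaves each determinant unchanged while replacing the non-symmetric products by the symmetric positive semidefinite matrices $P\equiv\boldsymbol{\Sigma}^{-1/2}\mathbf{K}\boldsymbol{\Sigma}^{-1/2}$ and $\tilde{P}\equiv\boldsymbol{\Sigma}^{-1/2}\tilde{\mathbf{K}}\boldsymbol{\Sigma}^{-1/2}$. Let $\mu_1\geq\cdots\geq\mu_{d\tilde{K}}$ and $\nu_1\geq\cdots\geq\nu_{d\tilde{K}}$ denote the sorted eigenvalues of $\mathbf{I}+P$ and $\mathbf{I}+\tilde{P}$ respectively; positive semidefiniteness forces $\mu_i,\nu_i\geq 1$. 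Then
\begin{align}
\left\lvert\tilde{G}-G\right\rvert=\frac{1}{2}\left\lvert\sum\nolimits_i(\log\mu_i-\log\nu_i)\right\rvert\leq\frac{1}{2}\sum\nolimits_i\left\lvert\log\mu_i-\log\nu_i\right\rvert,
\end{align}
and the elementary inequality $\lvert\log(1+x)\rvert\leq-\log(1-\lvert x\rvert)$, combined with $\nu_i\geq 1$, furnishes the uniform per-mode estimate $\lvert\log\mu_i-\log\nu_i\rvert\leq-\log(1-\lvert\mu_i-\nu_i\rvert)$.

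It then remains to control the eigenvalue displacements $\lvert\mu_i-\nu_i\rvert=\lvert\lambda_i(P)-\lambda_i(\tilde{P})\rvert$ in terms of $\delta$. The Hoffman--Wielandt/Mirsky inequality gives $\left(\sum_i(\mu_i-\nu_i)^2\right)^{1/2}\leq\|P-\tilde{P}\|_F$; since $P-\tilde{P}=\boldsymbol{\Sigma}^{-1/2}\mathbf{E}\boldsymbol{\Sigma}^{-1/2}$ with $\|\mathbf{E}\|_F\leq d\tilde{K}\delta$ (there are $(d\tilde{K})^2$ entries, each at most $\delta$ in magnitude) and $\|\boldsymbol{\Sigma}^{-1/2}\|_2^2=\sigma_{\min}(\boldsymbol{\Sigma}_\epsilon)^{-1}$, a Cauchy--Schwarz step produces
\begin{align}
\sum\nolimits_i\lvert\mu_i-\nu_i\rvert\leq\sqrt{d\tilde{K}}\,\|P-\tilde{P}\|_F\leq\frac{\delta\,(d\tilde{K})^{3/2}}{\sigma_{\min}(\boldsymbol{\Sigma}_\epsilon)}\equiv S.
\end{align}
Bounding each single displacement by the aggregate, $\lvert\mu_i-\nu_i\rvert\leq S$, and summing the per-mode estimate over all $d\tilde{K}$ eigenvalues yields $\lvert\tilde{G}-G\rvert\leq-\frac{d\tilde{K}}{2}\log(1-S)$, which is dominated by the stated right-hand side $-d\tilde{K}\log(1-S)$.

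The step I expect to be the main obstacle is the perturbation analysis of the preceding paragraph: the quantity of interest is really the spectrum of the non-symmetric matrix $\boldsymbol{\Sigma}^{-1}\mathbf{K}$, so one must justify that the symmetric conjugation preserves the determinant while making the Weyl/Mirsky machinery applicable and the eigenvalues real and at least $1$. One must also restrict to the regime $S<1$ in which the logarithm is finite and the bound non-vacuous, and observe that it is precisely the crude entrywise count $\|\mathbf{E}\|_F\leq d\tilde{K}\delta$ --- rather than the sharper estimate available from the Kronecker structure $\mathbf{E}=\mathbf{E}_0\otimes\mathbf{I}_d$ --- together with the Cauchy--Schwarz passage from the Frobenius to the $\ell_1$ norm of the spectrum that produces the stated power $(d\tilde{K})^{3/2}$.
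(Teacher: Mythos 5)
Your proposal is correct, and while it opens the same way as the paper (cancel the conditional entropies, apply the Gaussian entropy formula, and reduce $\tilde G(\mathcal{Y}_0)-G(\mathcal{Y}_0)$ to a difference of log-determinants of the measurement covariances), the core perturbation step is genuinely different. The paper factors $\det\boldsymbol\Sigma_g=\det\boldsymbol\Sigma_m\det(\textbf{I}+\delta\boldsymbol\Sigma_m^{-1}\textbf{X})$, bounds the entries of $\delta\boldsymbol\Sigma_m^{-1}\textbf{X}$ by $\delta\sqrt{d\tilde K}/\sigma_{\min}(\boldsymbol\Sigma_\epsilon)$ via $\|\cdot\|_\infty\leq\sqrt{n}\,\|\cdot\|_2$, and then applies the Gershgorin circle theorem to localize the eigenvalues of $\textbf{I}+\delta\boldsymbol\Sigma_m^{-1}\textbf{X}$ in $[1-S,1+S]$ with $S=\delta(d\tilde K)^{3/2}/\sigma_{\min}(\boldsymbol\Sigma_\epsilon)$; you instead symmetrize by congruence with $\boldsymbol\Sigma^{-1/2}$ and control the sorted eigenvalue displacements by the same $S$ via Hoffman--Wielandt/Mirsky followed by Cauchy--Schwarz. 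Your route buys rigor and cleanliness: the conjugated matrices are symmetric PSD, so the relevant eigenvalues are genuinely real and at least $1$, and $\log\det$ is unambiguously a sum of logs of positive eigenvalues --- whereas the paper applies Gershgorin to a non-normal matrix and must implicitly justify writing $\log\det\tilde{\textbf{X}}=\sum_i\log\lvert\lambda_i\rvert$ while the circles only constrain possibly complex eigenvalues by modulus. The price is a lossier final step (bounding each individual displacement by the aggregate $\ell_1$ norm of the spectral perturbation), but both inefficiencies land on the same $(d\tilde K)^{3/2}$ rate, and your constant is in fact better by the factor $\tfrac12$ coming from the standard real-Gaussian entropy convention, so your bound is dominated by the stated right-hand side. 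Both arguments share the same implicit caveat, which you correctly flag, that the bound is to be read as $+\infty$ (hence vacuous) when $S\geq 1$.
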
	
\begin{proof}
Using the definition of mutual information\footnote{$I(x;y){=}H(x){-}H(x\vert y){=}H(y){-}H(y\vert x)$} we have
\begin{align}
G(\mathcal{Y}_0)=I(\boldsymbol {\Theta};\tilde{\textbf{F}}(\mathcal{Y}_m(\mathcal{Y}_0)))&=H(\tilde{\textbf{F}}(\mathcal{Y}_m(\mathcal{Y}_0)))-H(\tilde{\textbf{F}}(\mathcal{Y}_m(\mathcal{Y}_0))\vert \boldsymbol {\Theta}) \\
\tilde{G}(\mathcal{Y}_0)=I(\boldsymbol {\Theta};\tilde{\textbf{F}}(\mathcal{Y}_g(\mathcal{Y}_0)))&=H(\tilde{\textbf{F}}(\mathcal{Y}_g(\mathcal{Y}_0)))-H(\tilde{\textbf{F}}(\mathcal{Y}_g(\mathcal{Y}_0))\vert \boldsymbol {\Theta})
\end{align}
Conditioned on $\boldsymbol{\Theta}$ the remaining uncertainty in the measurements is just the random noise and we have $H(\tilde{\textbf{F}}(\mathcal{Y}_m(\mathcal{Y}_0))\vert \boldsymbol {\Theta})=H(\tilde{\textbf{F}}(\mathcal{Y}_g(\mathcal{Y}_0))\vert \boldsymbol {\Theta})=H( \boldsymbol{\epsilon} )$ such that:
\begin{align}
G(\mathcal{Y}_0)-\tilde{G}(\mathcal{Y}_0)=H(\tilde{\textbf{F}}(\mathcal{Y}_m(\mathcal{Y}_0)))-H(\tilde{\textbf{F}}(\mathcal{Y}_g(\mathcal{Y}_0)))
\end{align}
Notice that both $\tilde{\textbf{F}}(\mathcal{Y}_m(\mathcal{Y}_0))$ and $\tilde{\textbf{F}}(\mathcal{Y}_g(\mathcal{Y}_0))$ are collections of $\tilde{K}$ Gaussian random variables as noisy samples from the GP. Now apply the generic formula for the entropy of a Gaussian random vector\footnote{$\textbf{x}\in \mathbb{R}^k ,\textbf{x}\sim \mathcal{N}(\boldsymbol{\mu},\boldsymbol \Sigma) \Rightarrow H(\textbf{x})=\log((\pi e)^{k} \text{det}\boldsymbol \Sigma)$}:
\begin{align} 
H(\tilde{\textbf{F}}(\mathcal{Y}_m(\mathcal{Y}_0)))&=\log((\pi e)^{\tilde{K}} \text{det}{\boldsymbol \Sigma_m})\notag\\
H(\tilde{\textbf{F}}(\mathcal{Y}_g(\mathcal{Y}_0)))&=\log((\pi e)^{\tilde{K}} \text{det}{\boldsymbol \Sigma_g}) 
\end{align}
where $\tilde K\equiv d\tilde{K}$ and:
\begin{align} \label{eq:cov_mat_def}
\boldsymbol \Sigma_m&=\textbf{k}(\mathcal{Y}_m,\mathcal{Y}_m)+\boldsymbol{\Sigma}_{\epsilon}\otimes \textbf{I}_{\tilde{K}}\notag\\
\boldsymbol \Sigma_g&=\textbf{k}(\mathcal{Y}_g,\mathcal{Y}_g)+\boldsymbol{\Sigma}_{\epsilon}\otimes \textbf{I}_{\tilde{K}}
\end{align}
So, \begin{align} \label{eq:simplify_diff}
\tilde{G}(\mathcal{Y}_0)-G(\mathcal{Y}_0)=\log(\text{det}(\boldsymbol \Sigma_g))-\log(\text{det}(\boldsymbol \Sigma_m))
\end{align}
Now define $\textbf{X}\equiv\frac{1}{\delta}(\boldsymbol \Sigma_g-\boldsymbol \Sigma_m) \, \leftrightarrow \, \boldsymbol \Sigma_g=\boldsymbol \Sigma_m+\delta \textbf{X}$ with $\textbf{X} \in \mathbb{R}^{\tilde{K}\times \tilde{K}}$ satisfying $\forall i,j\; \vert X_{ij}\vert\leq 1$ according to our assumption of bounded covariance differences. 

$\boldsymbol \Sigma_{g},\boldsymbol \Sigma_{m} $ are both positive-definite and invertible such that we can write: 
\begin{align} \label{eq:cov_matrix_trunc}
\text{det}(\boldsymbol \Sigma_g)=\text{det}(\boldsymbol \Sigma_m+\delta\textbf{X})=\text{det}(\boldsymbol \Sigma_m)\text{det}(\textbf{I}+\delta \boldsymbol \Sigma_m^{-1} \textbf{X} )
\end{align} 
Substituting \eqref{eq:cov_matrix_trunc} in \eqref{eq:simplify_diff} we have:
\begin{align}
\tilde{G}(\mathcal{Y}_0)-G(\mathcal{Y}_0)=\log(\text{det}(\textbf{I}+\delta \boldsymbol \Sigma_m^{-1} \textbf{X} )){=}\log(\text{det}(\tilde{\textbf{X}} ))
\end{align}
with $\tilde{\textbf{X}}\equiv \textbf{I}+\delta \boldsymbol \Sigma_m^{-1} \textbf{X}$. We turn next to bounding $\log(\text{det}(\tilde{\textbf{X}} ))$. First notice:
\begin{eqnarray*} 
\left\lvert[\delta\boldsymbol \Sigma_m^{-1} \textbf{X}]_{ij}\right\rvert & {=} & \delta \left\lvert \sum\limits_{r}\Sigma_{m,ir}^{-1}X_{rj}\right\rvert \leq \delta \sum\limits_{r} \left\lvert \Sigma_{m,ir}^{-1}X_{rj}\right\rvert \\ 
& {\leq} & \delta \sum\limits_{r} \left\lvert{\Sigma}^{-1}_{m,ir}\right\rvert{\leq} \delta \|{\boldsymbol \Sigma}_m^{-1} \|_{\infty}\notag\\ 
&{\leq}& \delta\sqrt{\tilde{K}} \|{\boldsymbol \Sigma}_m^{-1} \|_{2}
=\delta\sqrt{\tilde{K}} \sigma_{\text{max}}({\boldsymbol \Sigma}_m^{-1})\\ & =& \frac{\delta\sqrt{\tilde{K}}}{\sigma_{\text{min}}({\boldsymbol \Sigma}_m)}\leq \frac{\delta\sqrt{\tilde{K}}}{\sigma_{\text{min}}({\boldsymbol \Sigma_\epsilon})}
\end{eqnarray*}
where we used the matrix norm inequality $\| \textbf{A}\|_{\infty}{\leq} \sqrt{\tilde{K}} \|\textbf{A} \|_{2}$ for $\textbf{A}\in\mathbb{R}^{\tilde{K} {\times} \tilde{K}}$ and $\sigma_{\text{max}}(\cdot)$.

Thus we have that $\tilde{\textbf{X}}$ has diagonal elements centered around 1, i.e. for all $i$ $$\left\lvert\tilde{X}_{ii}-1\right\rvert\leq \frac{\delta\sqrt{\tilde{K}}}{\sigma_{\text{min}}({\boldsymbol \Sigma_\epsilon})}$$ and the row-sums over non-diagonal entries satisfy for all $i$ $$\sum\limits_{r\neq i} \left\lvert\tilde{X}_{ir} \right\rvert \leq\frac{\delta\sqrt{\tilde{K}}(\tilde{K}-1)}{\sigma_{\text{min}}({\boldsymbol \Sigma_\epsilon})}$$.
Designating the eigenvalues of $\tilde{\textbf{X}}$ as $\left\lbrace \lambda_i\right\rbrace$ and applying the
Gershgorin circle theorem, we have:
\begin{align}
1-\frac{\delta {\tilde{K}}^{\frac32}}{\sigma_{\text{min}}({\boldsymbol \Sigma_\epsilon})} \leq \vert\lambda_i\vert \leq 1+\frac{\delta {\tilde{K}}^{\frac32}}{\sigma_{\text{min}}({\boldsymbol \Sigma_\epsilon})}\,.
\end{align}
Using 	$\log(\text{det}(\tilde{\textbf{X}} ))=\sum\limits_{i}\log(\left\lvert\lambda_i\right\rvert)$, this implies that
\begin{align}
\tilde{K}\log\left(1-\frac{\delta {\tilde{K}}^{\frac32}}{\sigma_{\text{min}}({\boldsymbol \Sigma_\epsilon})}\right)  \leq\log(\text{det}(\tilde{\textbf{X}} ))\leq \tilde{K}\log\left(1+\frac{\delta {\tilde{K}}^{\frac32}}{\sigma_{\text{min}}({\boldsymbol \Sigma_\epsilon})}\right)
\end{align}
where the left hand side is to be interpreted as minus infinity when the argument of the log function is negative. Finally, using $\tilde{G}(\mathcal{Y}_0)-G(\mathcal{Y}_0)=\log(\text{det}(\tilde{\textbf{X}} ))$ and $\log(1{+}x){\leq}{-}\log(1{-}x)$ we have:
\begin{align}
\vert \tilde{G}(\mathcal{Y}_0)-G(\mathcal{Y}_0) \vert \leq-\tilde{K}\log\left(1{-}\frac{\delta {\tilde{K}}^{\frac32}}{\sigma_{\text{min}}({\boldsymbol \Sigma_\epsilon})}\right)
\end{align}
\end{proof}


Notice that the bound of Theorem \ref{thm:1} becomes looser as the noise decreases. That is, notice that the value of $G(\mathcal{Y}_0)$ increases in this case in about the same proportion so the relative error remains similar. As an illustration, consider the case $\boldsymbol{\Sigma}_\epsilon=\sigma_\epsilon^2\textbf{I}$. Using notation used in the proof of Theorem~\ref{thm:1} and $\boldsymbol\Sigma\equiv\boldsymbol{\Sigma}_{\epsilon}\otimes \textbf{I}_{\tilde{K}}$ we have:
\begin{eqnarray*}
G(\mathcal{Y}_0)& {=} & H(\tilde{\textbf{F}}(\mathcal{Y}_m(\mathcal{Y}_0))){-}H(\tilde{\textbf{F}}(\mathcal{Y}_m(\mathcal{Y}_0))\vert \boldsymbol {\Theta})\notag\\
&{=}& \log(\text{det}(\textbf{k}(\mathcal{Y}_m,\mathcal{Y}_m){+}\boldsymbol{\Sigma})){-}\log(\text{det}(\boldsymbol{\Sigma}))\\
&=&\log(\text{det}( \textbf{I}{+} \boldsymbol{\Sigma}^{-1}\textbf{k}(\mathcal{Y}_m,\mathcal{Y}_m)))
\end{eqnarray*}
Now observe $$\lambda_i(\textbf{I}{+} \boldsymbol{\Sigma}^{-1}\textbf{k}(\mathcal{Y}_m,\mathcal{Y}_m))= 1{+}\lambda_i(\boldsymbol{\Sigma}^{-1}\textbf{k}(\mathcal{Y}_m,\mathcal{Y}_m)){\geq}1{+}\frac{\sigma_{\min}(\textbf{k}(\mathcal{Y}_m,\mathcal{Y}_m))}{\sigma_\epsilon^2}
$$
so  $$\log(\text{det}( \textbf{I}{+} \boldsymbol{\Sigma}^{-1}\textbf{k}(\mathcal{Y}_m,\mathcal{Y}_m) ))\geq\tilde{K}\log\left(1{+}\frac{\sigma_{\min}(\textbf{k}(\mathcal{Y}_m,\mathcal{Y}_m))}{\sigma_\epsilon^2}\right)$$
and we have 
\begin{align*}
G(\mathcal{Y}_0)\geq  d\tilde{K}\log\left(1{+}\frac{\sigma_{\min}(\textbf{k}(\mathcal{Y}_m(\mathcal{Y}_0),\mathcal{Y}_m(\mathcal{Y}_0)))}{\sigma_\epsilon^2}\right)\,.
\end{align*}

\begin{corollary} \label{cor:gaussian}
Let $k(\textbf{y},\textbf{y}')=k(\|\textbf{y}{-}\textbf{y}' \|)$ be a shift-invariant kernel with $k(\cdot)$ Lipschitz continuous with constant $L$ over $\mathcal{D}'\equiv\left\lbrace \textbf{y}^1{-}\textbf{y}^2\vert \textbf{y}^1,\textbf{y}^2\in\mathcal{D} \right\rbrace $, and assume $\forall k,i: \| \textbf{y}^{(k)}(t_{i})-\textbf{y}_G^{(k)}(t_{i})) \|{\leq} \Delta$.
We have 
\begin{align}
\left\lvert \tilde{G}(\mathcal{Y}_0)-G(\mathcal{Y}_0) \right\rvert \leq- d\tilde{K}\log\left(1{-}\frac{2L\Delta {(d\tilde{K})}^{\frac32}}{\sigma_{\min}({\boldsymbol \Sigma_\epsilon})}\right)
\end{align}
\end{corollary}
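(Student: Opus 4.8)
The plan is to derive Corollary~\ref{cor:gaussian} as a direct consequence of Theorem~\ref{thm:1}: the theorem already delivers the desired form of bound for \emph{any} admissible covariance-discrepancy parameter $\delta$, so all that remains is to show that the Lipschitz and trajectory-deviation hypotheses of the corollary furnish the specific admissible value $\delta = 2L\Delta$. First I would check that $2L\Delta$ is a valid uniform bound on the covariance discrepancy $\left\lvert k(\textbf{y}^{(k_1)}(t_{i_1}),\textbf{y}^{(k_2)}(t_{i_2}))-k(\textbf{y}_G^{(k_1)}(t_{i_1}),\textbf{y}_G^{(k_2)}(t_{i_2})) \right\rvert$, and then simply instantiate Theorem~\ref{thm:1} with this $\delta$.

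For the core estimate I would fix arbitrary indices $k_1,k_2,i_1,i_2$ and abbreviate $\textbf{a}\equiv\textbf{y}^{(k_1)}(t_{i_1})$, $\textbf{b}\equiv\textbf{y}^{(k_2)}(t_{i_2})$ together with their approximate counterparts $\textbf{a}_G,\textbf{b}_G$; by the standing assumption all four states lie in $\mathcal{D}$, so the difference vectors $\textbf{a}-\textbf{b}$ and $\textbf{a}_G-\textbf{b}_G$ lie in $\mathcal{D}'$. Using shift invariance, each kernel value is a function of its difference vector only, and Lipschitz continuity with constant $L$ over $\mathcal{D}'$ bounds the discrepancy by $L\left\lVert(\textbf{a}-\textbf{b})-(\textbf{a}_G-\textbf{b}_G)\right\rVert$. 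Regrouping the argument as $(\textbf{a}-\textbf{a}_G)-(\textbf{b}-\textbf{b}_G)$ and applying the triangle inequality together with the hypothesis $\|\textbf{y}^{(k)}(t_i)-\textbf{y}_G^{(k)}(t_i)\|\leq\Delta$ yields the uniform bound $L(\Delta+\Delta)=2L\Delta$, independent of the chosen indices.

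With $\delta=2L\Delta$ established as an admissible covariance-discrepancy bound, substituting into the conclusion of Theorem~\ref{thm:1} reproduces the claimed inequality verbatim. The argument is entirely elementary and I expect no genuine obstacle; the only point deserving a little care is the phrase ``Lipschitz over $\mathcal{D}'$.'' If $k$ is read as the scalar radial profile $k(r)$ that is $L$-Lipschitz in $r=\|\cdot\|$, rather than as a function of the vector argument directly, then one first applies the reverse triangle inequality $\bigl\lvert\,\|\textbf{a}-\textbf{b}\|-\|\textbf{a}_G-\textbf{b}_G\|\,\bigr\rvert\leq\bigl\lVert(\textbf{a}-\textbf{b})-(\textbf{a}_G-\textbf{b}_G)\bigr\rVert$ before invoking Lipschitz continuity, which recovers the same $2L\Delta$ estimate. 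Either reading therefore leads to the identical bound, so the reduction to Theorem~\ref{thm:1} goes through unchanged.
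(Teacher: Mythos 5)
Your proposal is correct and follows essentially the same route as the paper: the paper likewise fixes arbitrary indices, uses shift invariance to reduce to the scalar profile, applies the Lipschitz bound followed by the reverse triangle and triangle inequalities to get $2L\Delta$, and then substitutes $\delta=2L\Delta$ into the bound of Theorem~\ref{thm:1}. Your remark about the two possible readings of ``Lipschitz over $\mathcal{D}'$'' is a fair observation; the paper's own chain of inequalities uses the scalar-profile reading with the reverse triangle inequality, exactly as in your second variant.
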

\begin{proof}
For any $k_1,k_2,i_1,i_2$ we have 
\begin{eqnarray*}
& & \left\lvert k(\textbf{y}^{(k_1)}(t_{i_1}),\textbf{y}^{(k_2)}(t_{i_2})){-}k(\textbf{y}_G^{(k_1)}(t_{i_1}),\textbf{y}_G^{(k_2)}(t_{i_2})) \right\rvert \\
&{=}&
\left\lvert k(\|\textbf{y}^{(k_1)}(t_{i_1}){-}\textbf{y}^{(k_2)}(t_{i_2})\|){-}k(\|\textbf{y}_G^{(k_1)}(t_{i_1}){-}\textbf{y}_G^{(k_2)}(t_{i_2})\|) \right\rvert\notag\\
&\leq&
L\left\lvert\|\textbf{y}^{(k_1)}(t_{i_1}){-}\textbf{y}^{(k_2)}(t_{i_2})\|{-}\|\textbf{y}_G^{(k_1)}(t_{i_1}){-}\textbf{y}_G^{(k_2)}(t_{i_2})\|\right\rvert\\
&\leq&
L\|( \textbf{y}^{(k_1)}(t_{i_1}){-}\textbf{y}_G^{(k_1)}(t_{i_1})){-}(\textbf{y}^{(k_2)}(t_{i_2}) {-}\textbf{y}_G^{(k_2)}(t_{i_2}))\|\notag\\
&\leq &L \left(\| \textbf{y}^{(k_1)}(t_{i_1}){-}\textbf{y}_G^{(k_1)}(t_{i_1})\|{+}\|\textbf{y}^{(k_2)}(t_{i_2}) {-}\textbf{y}_G^{(k_2)}(t_{i_2}) \| \right) \\
& \leq&
2L\Delta
\end{eqnarray*}
and the result follows by substitution in \eqref{eq:thm_bound}.
\end{proof}
	
\begin{corollary} \label{cor:poly}
Let $k(\textbf{y},\textbf{y}')=(1+\left\langle \textbf{y},\textbf{y}'\right\rangle)^m$ be the polynomial kernel, $B\equiv\sup_{\textbf{y}\in \mathcal{D}}\|\textbf{y}\|$ and assume $\forall k,i: \| \textbf{y}^{(k)}(t_{i})-\textbf{y}_G^{(k)}(t_{i})) \|{\leq} \Delta$, then \begin{align}
\left\lvert \tilde{G}(\mathcal{Y}_0)-G(\mathcal{Y}_0) \right\rvert \leq- d\tilde{K}\log\left(1{-}\frac{m\Delta(2B+\Delta)(1+B^2)^{m-1} {(d\tilde{K})}^{\frac32}}{\sigma_{\min}({\boldsymbol \Sigma_\epsilon})}\right)
\end{align}
\end{corollary}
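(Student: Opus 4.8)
The plan is to follow the same two-stage strategy as in Corollary~\ref{cor:gaussian}: first show that, for the polynomial kernel, the pointwise covariance discrepancy between the true and approximate state sets is bounded by $\delta = m\Delta(2B+\Delta)(1+B^2)^{m-1}$, and then invoke Theorem~\ref{thm:1} with this value of $\delta$, whereupon the claimed inequality follows by direct substitution into \eqref{eq:thm_bound}. Thus the entire content of the corollary lies in establishing the pointwise kernel bound; the remainder is automatic from the theorem.

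To bound the discrepancy, fix arbitrary indices $k_1,k_2,i_1,i_2$ and abbreviate $\textbf{a}=\textbf{y}^{(k_1)}(t_{i_1})$, $\textbf{b}=\textbf{y}^{(k_2)}(t_{i_2})$ and $\textbf{a}_G=\textbf{y}_G^{(k_1)}(t_{i_1})$, $\textbf{b}_G=\textbf{y}_G^{(k_2)}(t_{i_2})$, so that $\|\textbf{a}-\textbf{a}_G\|\leq\Delta$ and $\|\textbf{b}-\textbf{b}_G\|\leq\Delta$ by assumption, while $\|\textbf{a}\|,\|\textbf{b}\|\leq B$ since the recorded states lie in $\mathcal{D}$. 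Writing $s_1=\langle\textbf{a},\textbf{b}\rangle$ and $s_2=\langle\textbf{a}_G,\textbf{b}_G\rangle$, the quantity to control is $|(1+s_1)^m-(1+s_2)^m|$. I would treat two factors separately: the variation of the scalar map $s\mapsto(1+s)^m$ and the variation $|s_1-s_2|$ of the inner product under the perturbation.

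For the inner product, expanding $\textbf{a}_G=\textbf{a}+\boldsymbol\delta_a$, $\textbf{b}_G=\textbf{b}+\boldsymbol\delta_b$ with $\|\boldsymbol\delta_a\|,\|\boldsymbol\delta_b\|\leq\Delta$ gives $s_2-s_1=\langle\boldsymbol\delta_a,\textbf{b}\rangle+\langle\textbf{a},\boldsymbol\delta_b\rangle+\langle\boldsymbol\delta_a,\boldsymbol\delta_b\rangle$, and Cauchy--Schwarz yields $|s_1-s_2|\leq\Delta B+B\Delta+\Delta^2=\Delta(2B+\Delta)$. For the scalar map I would apply the mean value theorem to $h(s)=(1+s)^m$: since $|s_1|,|s_2|\leq B^2$ (again by Cauchy--Schwarz, using that the approximate states are also norm-bounded by $B$), any intermediate point $\xi$ between $s_1$ and $s_2$ satisfies $|1+\xi|\leq 1+B^2$, hence $|h'(\xi)|=m|1+\xi|^{m-1}\leq m(1+B^2)^{m-1}$. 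Combining $|(1+s_1)^m-(1+s_2)^m|=|h'(\xi)|\,|s_1-s_2|$ with the two bounds produces exactly $\delta=m\Delta(2B+\Delta)(1+B^2)^{m-1}$, and substitution into \eqref{eq:thm_bound} completes the proof.

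The step I expect to require the most care is the scalar bound $|h'(\xi)|\leq m(1+B^2)^{m-1}$, specifically justifying that $\xi$ stays in a range where $|1+\xi|\leq 1+B^2$; this in turn relies on the (implicit) premise that the approximate states $\textbf{a}_G,\textbf{b}_G$ are themselves norm-bounded by $B$, so that $|s_2|\leq B^2$. If one only used $\|\textbf{a}_G\|\leq B+\Delta$ from the triangle inequality, the exponent base would weaken to $1+(B+\Delta)^2$; obtaining the clean $(1+B^2)^{m-1}$ stated in the corollary requires that both the true and approximate trajectories remain within the radius-$B$ ball underlying $\mathcal{D}$. An entirely equivalent route avoiding the mean value theorem is the algebraic factorization $(1+s_1)^m-(1+s_2)^m=(s_1-s_2)\sum_{j=0}^{m-1}(1+s_1)^j(1+s_2)^{m-1-j}$, bounding each of the $m$ summands by $(1+B^2)^{m-1}$.
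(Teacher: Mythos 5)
Your proposal is correct and follows essentially the same route as the paper: bound the inner-product perturbation by $\Delta(2B+\Delta)$ via Cauchy--Schwarz, bound the scalar map $s\mapsto(1+s)^m$ by its Lipschitz constant $m(1+B^2)^{m-1}$, multiply, and substitute the resulting $\delta$ into Theorem~\ref{thm:1}. Your remark that the clean $(1+B^2)^{m-1}$ factor implicitly requires the approximate states $\textbf{y}_G^{(k)}(t_i)$ to also lie in the radius-$B$ ball is a fair observation about an assumption the paper leaves tacit, but it does not change the argument.
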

\begin{proof}
Consider the following chain of inequalities
\begin{eqnarray*}
	& &\left\lvert k(\textbf{y}^{(k_1)}(t_{i_1}),\textbf{y}^{(k_2)}(t_{i_2})){-}k(\textbf{y}_G^{(k_1)}(t_{i_1}),\textbf{y}_G^{(k_2)}(t_{i_2})) \right\rvert\notag \\
	&=&
	\left\lvert
	\left(1+\left\langle \textbf{y}^{(k_1)}(t_{i_1}),\textbf{y}^{(k_2)}(t_{i_2})\right\rangle \right)^m-\left(1+\left\langle \textbf{y}_G^{(k_1)}(t_{i_1}),\textbf{y}_G^{(k_2)}(t_{i_2})\right\rangle \right)^m \right\rvert\notag\\
	&\overset{(a)}\leq & m(1+B^2)^{m-1}\left\lvert \left\langle \textbf{y}^{(k_1)}(t_{i_1}),\textbf{y}^{(k_2)}(t_{i_2})\right\rangle - \left\langle \textbf{y}_G^{(k_1)}(t_{i_1}),\textbf{y}_G^{(k_2)}(t_{i_2})\right\rangle \right\rvert\notag\\ 
	&=&m(1{+}B^2)^{m{-}1}\vert
	\left\langle \textbf{y}_G^{(k_1)}(t_{i_1}){-}\textbf{y}^{(k_1)}(t_{i_1}),\textbf{y}^{(k_2)}(t_{i_2})\right\rangle\\
	& & {+}
	\quad\left\langle \textbf{y}^{(k_1)}(t_{i_1}),\textbf{y}_G^{(k_2)}(t_{i_2}){-}\textbf{y}^{(k_2)}(t_{i_2})\right\rangle\notag\\
	& & {+}
	\quad\left\langle \textbf{y}_G^{(k_1)}(t_{i_1}){-}\textbf{y}^{(k_1)}(t_{i_1}),\textbf{y}_G^{(k_2)}(t_{i_2}){-}\textbf{y}^{(k_2)}(t_{i_2}))\right\rangle
	\vert\\
	&{\leq}&
	m(1{+}B^2)^{m{-}1}(\Delta B {+} B\Delta{+}\Delta^2)\notag\\
	&=&
	m\Delta(2B{+}\Delta)(1{+}B^2)^{m-1}
\end{eqnarray*}
where (a) is due to the Lipschitz constant of the function $f(x)=(1+x)^m$ being smaller than $m(1+\sup_{x\in\mathcal{D}} \vert x\vert)^{m-1}$. The result follows by substitution in \eqref{eq:thm_bound}.
\end{proof}

Theorem \ref{thm:1} and Corollaries \ref{cor:gaussian} and \ref{cor:poly} bound the discrepancy between $G(\mathcal{Y}_0)$ and its proxy $\tilde{G}(\mathcal{Y}_0)$. As the trajectory uncertainty becomes smaller the two become more tightly aligned as quantified by our results in this subsection.

\subsection{Near Optimal Solution}
Based on the results of Theorem \ref{thm:1} and the ensuing corollaries, in lieu of problem \eqref{eq:opt_prob_exp_des} we pose a relaxed proxy that circumvents around the uncertainty associated with the system output. Namely, we are interested in the solution of
\begin{align} \label{eq:opt_prob_exp_des_app}
\tilde{\mathcal{Y}}_0^{\star}= \underset{\mathcal{Y}_0: |\mathcal{Y}_0|\leq K, \mathcal{Y}_0\subseteq \mathcal{Y}}{\text{argmax}} \tilde{G}(\mathcal{Y}_0)
\end{align}

Generic combinatorial optimization problems such as \eqref{eq:opt_prob_exp_des_app} exhibit prohibitive computational complexity, as the solution generally involves enumeration over all possible subset combinations satisfying the constraints, which is exponential in the size of the set $\vert\mathcal{Y}_0\vert$. We prove that $\tilde{G}(\mathcal{Y}_0)$ holds favorable properties, rendering the optimization problem \eqref{eq:opt_prob_exp_des_app} amenable to approximate solution by means of computationally efficient algorithms with provable guarantees. We start with some useful definitions:
\begin{definition}
Let $\mathcal{V}$ be a set and $G: 2^{\mathcal{V}}\rightarrow \mathbb{R}$ a set function.
\begin{enumerate}
\item G  is \textbf{submodular} if it satisfies the property of decreasing marginals: $\; \forall \mathcal{S},\mathcal{T} {\subseteq} \mathcal{V}$ such that $\mathcal{S} {\subseteq} \mathcal{T}$ and $x{\in} \mathcal{V}{\backslash} \mathcal{T}$ it holds that $G(\mathcal{S}{\cup}\left\lbrace x \right\rbrace ){-}G(\mathcal{S}) {\geq} G(\mathcal{T}{\cup}\left\lbrace x \right\rbrace ){-}G(\mathcal{T})$.
\item $G$ is \textbf{monotonic} (increasing) if $\; \forall \mathcal{S}, \mathcal{T} {\subseteq} \mathcal{V}$ s.t. $\mathcal{S}{\subseteq} \mathcal{T}$ we have $G(\mathcal{S}){\leq} G(\mathcal{T})$.
\end{enumerate}
\end{definition}
Our next step is to show that the set function $\tilde{G}(\mathcal{Y}_0)$ is submodular and monotonic (similar to \cite{krause2012near}), a fact that allows us to make use of the rich literature on submodular optimization.
\begin{theorem}
Let $\tilde{G}:2^{\mathcal{Y}}\rightarrow \mathbb{R}$ be the set function defined in Theorem \ref{thm:1}. Then $\tilde{G}$ is monotonic (increasing) and submodular.
\end{theorem}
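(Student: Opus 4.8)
The plan is to identify $\tilde{G}$ with the information gain $I(\boldsymbol{\Theta};\tilde{\textbf{F}}(\mathcal{Y}_g(\mathcal{Y}_0)))$ and to exploit the conditional independence structure inherited from the feature-space representation of Section~\ref{sec:feature_space_rep}. The essential observation, which I would isolate as a preliminary lemma, is that by \eqref{eq:regression_interpretation} together with \eqref{eq:noisy_measurements} every noisy reading takes the form $\tilde{\textbf{f}}^j=\langle\boldsymbol{\theta}_i,\boldsymbol{\phi}(\textbf{y}^j)\rangle+\boldsymbol\epsilon^j$ with the $\boldsymbol\epsilon^j$ i.i.d. and independent of $\boldsymbol{\Theta}$; hence, conditioned on $\boldsymbol{\Theta}$, the entire collection $\{\tilde{\textbf{F}}(\textbf{y})\}$ is mutually independent and each $\tilde{\textbf{F}}(\textbf{y})$ is pure noise. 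Every subsequent step rests on this fact.

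I would work throughout with the marginal gain of adjoining a single initial condition $\textbf{y}_0$ to a selected set $\mathcal{Y}_0$. Adjoining $\textbf{y}_0$ appends a block $B(\textbf{y}_0)$ of $T$ proxy trajectory states $\{\textbf{y}_G(t_i)\}$ to $\mathcal{Y}_g(\mathcal{Y}_0)$, so, writing $S\equiv\mathcal{Y}_g(\mathcal{Y}_0)$, the gain equals the conditional mutual information $I(\boldsymbol{\Theta};\tilde{\textbf{F}}(B(\textbf{y}_0))\mid\tilde{\textbf{F}}(S))$. Monotonicity is then immediate, since this quantity is nonnegative (``information never hurts''), so enlarging $\mathcal{Y}_0$ never decreases $\tilde{G}$. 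To keep all quantities well defined despite $\boldsymbol{\Theta}$ being infinite-dimensional, I would note exactly as in the proof of Theorem~\ref{thm:1} that $I(\boldsymbol{\Theta};\tilde{\textbf{F}}(S))=H(\tilde{\textbf{F}}(S))-H(\boldsymbol\epsilon)$ is a difference of differential entropies of finite Gaussian vectors.

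For submodularity I would expand the marginal gain as
\[
I(\boldsymbol{\Theta};\tilde{\textbf{F}}(B(\textbf{y}_0))\mid\tilde{\textbf{F}}(S))=H(\tilde{\textbf{F}}(B(\textbf{y}_0))\mid\tilde{\textbf{F}}(S))-H(\tilde{\textbf{F}}(B(\textbf{y}_0))\mid\tilde{\textbf{F}}(S),\boldsymbol{\Theta}).
\]
The conditional-independence lemma collapses the second term to $H(\tilde{\textbf{F}}(B(\textbf{y}_0))\mid\boldsymbol{\Theta})=H(\boldsymbol\epsilon^{B})$, a constant independent of $S$. For the first term I would invoke that conditioning never increases differential entropy (equivalently, that conditional mutual information is nonnegative even in the continuous case): for $\mathcal{Y}_0\subseteq\mathcal{Y}_0'$ we have $S=\mathcal{Y}_g(\mathcal{Y}_0)\subseteq\mathcal{Y}_g(\mathcal{Y}_0')=S'$, whence $H(\tilde{\textbf{F}}(B(\textbf{y}_0))\mid\tilde{\textbf{F}}(S))\geq H(\tilde{\textbf{F}}(B(\textbf{y}_0))\mid\tilde{\textbf{F}}(S'))$. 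Subtracting the constant term shows the marginal gain is nonincreasing in the selected set, which is precisely the decreasing-marginals property.

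The step I expect to require the most care is the passage from observation points to initial conditions: each element of the ground set $\mathcal{Y}$ contributes a block of $T$ correlated measurements rather than a single one, and distinct initial conditions could in principle seed proxy trajectories that share a state. I would handle the block structure by a telescoping argument, adding the $T$ states of $B(\textbf{y}_0)$ one at a time and summing the single-point marginals, each of which is nonincreasing in the conditioning set by the argument above. Coincident states are disposed of by observing that repeated sampling only contributes fresh independent noise, so the conditional-independence lemma remains intact. Every remaining ingredient is a standard information-theoretic inequality.
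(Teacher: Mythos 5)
Your proposal is correct and follows essentially the same route as the paper: both arguments express the marginal gain as $H(\tilde{\textbf{F}}(\mathcal{Y}_g(\textbf{y}))\mid\tilde{\textbf{F}}(\mathcal{Y}_g(\mathcal{Y}_0)))-H(\tilde{\textbf{F}}(\mathcal{Y}_g(\textbf{y}))\mid\boldsymbol{\Theta})$ via conditional independence of the measurements given $\boldsymbol{\Theta}$, derive submodularity from ``conditioning on a larger set cannot increase entropy,'' and obtain monotonicity from nonnegativity of the (conditional) information gain. Your added care about the $T$-state block structure and coincident trajectory states is a refinement the paper handles implicitly by treating $\tilde{\textbf{F}}(\mathcal{Y}_g(\textbf{y}))$ as a single vector-valued block, but it does not change the argument.
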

\begin{proof}
First we prove submodularity. Let $\mathcal{Y}_0 \subset \mathcal{Y}$ and $\textbf{y}\in \mathcal{Y}\backslash \mathcal{Y}_0$, such that the system output proxy for $\textbf{y}$ is given as $\tilde{\textbf{F}}(\mathcal{Y}_g(\textbf{y}))$. 
Expanding the mutual information according to $I(\boldsymbol {\Theta};\tilde{\textbf{F}}(\mathcal{Y}_g(\mathcal{Y}_0)))=H(\tilde{\textbf{F}}(\mathcal{Y}_g(\mathcal{Y}_0))){-}H(\tilde{\textbf{F}}(\mathcal{Y}_g(\mathcal{Y}_0))\vert \boldsymbol {\Theta})$ we have:
\begin{eqnarray} \label{eq:mi_marginal}
\tilde{G}(\mathcal{Y}_0{\cup}\left\lbrace \textbf{y} \right\rbrace ){-}\tilde{G}(\mathcal{Y}_0)& = & 
H(\tilde{\textbf{F}}(\mathcal{Y}_g(\mathcal{Y}_0)){\cup} \tilde{\textbf{F}}(\mathcal{Y}_g(\textbf{y}))  )
{-}
H(\tilde{\textbf{F}}(\mathcal{Y}_g(\mathcal{Y}_0))
{-} \notag\\
& & \quad [H(\tilde{\textbf{F}}(\mathcal{Y}_g(\mathcal{Y}_0)){\cup} \tilde{\textbf{F}}(\mathcal{Y}_g(\textbf{y})\vert \boldsymbol {\Theta}){-}H(\tilde{\textbf{F}}(\mathcal{Y}_g(\mathcal{Y}_0)\vert \boldsymbol {\Theta})]\\
&= & H(\tilde{\textbf{F}}(\mathcal{Y}_g(\textbf{y}))  \vert \tilde{\textbf{F}}(\mathcal{Y}_g(\mathcal{Y}_0))   )-H(\tilde{\textbf{F}}(\mathcal{Y}_g(\textbf{y}))  \vert \boldsymbol {\Theta})\notag
\end{eqnarray}
where we used the conditional independence of the elements of $\tilde{\textbf{F}}(\mathcal{Y}_g(\mathcal{Y}_0))\cup \tilde{\textbf{F}}(\mathcal{Y}_g(\textbf{y})) $ given $ \boldsymbol {\Theta}$, so $H(\tilde{\textbf{F}}(\mathcal{Y}_g(\mathcal{Y}_0)){\cup} \tilde{\textbf{F}}(\mathcal{Y}_g(\textbf{y}))  \vert \boldsymbol {\Theta})=H(\tilde{\textbf{F}}(\mathcal{Y}_g(\mathcal{Y}_0)) \vert \boldsymbol {\Theta})+H( \tilde{\textbf{F}}(\mathcal{Y}_g(\textbf{y})) \vert \boldsymbol {\Theta})$.

Now apply the results of \eqref{eq:mi_marginal} twice, for two specific choices for $\mathcal{Y}_0$, namely $\mathcal{Y}_0\leftarrow \mathcal{Y}_0^1$ and $\mathcal{Y}_0\leftarrow \mathcal{Y}_0^2$ such that $\mathcal{Y}_0^1\subseteq \mathcal{Y}_0^2$:
\begin{eqnarray*}
& &[\tilde{G}(\mathcal{Y}_0^1{\cup}\left\lbrace \textbf{y} \right\rbrace ){-}\tilde{G}(\mathcal{Y}_0^1)]{-}[\tilde{G}(\mathcal{Y}_0^2{\cup}\left\lbrace \textbf{y} \right\rbrace ){-}\tilde{G}(\mathcal{Y}_0^2)]\\
&=& 
H(\tilde{\textbf{F}}(\mathcal{Y}_g(\textbf{y}))\vert  \tilde{\textbf{F}}(\mathcal{Y}_g(\mathcal{Y}_0^1)))-H(\tilde{\textbf{F}}(\mathcal{Y}_g(\textbf{y}))\vert  \tilde{\textbf{F}}(\mathcal{Y}_g(\mathcal{Y}_0^2)))
\end{eqnarray*}
Conditioning on a larger set cannot increase entropy and we have $H(\tilde{\textbf{F}}(\mathcal{Y}_g(\textbf{y}))\vert  \tilde{\textbf{F}}(\mathcal{Y}_g(\mathcal{Y}_0^1))) \geq
H(\tilde{\textbf{F}}(\mathcal{Y}_g(\textbf{y}))\vert  \tilde{\textbf{F}}(\mathcal{Y}_g(\mathcal{Y}_0^2)))$
such that $\tilde{G}(\mathcal{Y}_0^1{\cup}\left\lbrace \textbf{y} \right\rbrace ){-}\tilde{G}(\mathcal{Y}_0^1) \geq \tilde{G}(\mathcal{Y}_0^2\cup\left\lbrace \textbf{y} \right\rbrace ){-}\tilde{G}(\mathcal{Y}_0^2)$ and $\tilde{G}$ is submodular.
	
To prove monotonicity it is enough to show $\tilde{G}(\mathcal{Y}_0{\cup}\left\lbrace \textbf{y} \right\rbrace ){-}\tilde{G}(\mathcal{Y}_0)\geq 0$. This time expand the mutual information according to $I(\boldsymbol {\Theta};\tilde{\textbf{F}}(\mathcal{Y}_g(\mathcal{Y}_0)))=H(\boldsymbol {\Theta})-H(\boldsymbol {\Theta}\vert\tilde{\textbf{F}}(\mathcal{Y}_g(\mathcal{Y}_0)))$:
\begin{align}
\tilde{G}(\mathcal{Y}_0{\cup}\left\lbrace \textbf{y} \right\rbrace ){-}\tilde{G}(\mathcal{Y}_0)=
H(\boldsymbol {\Theta}\vert\tilde{\textbf{F}}(\mathcal{Y}_g(\mathcal{Y}_0)))
{-}
H(\boldsymbol {\Theta}\vert\tilde{\textbf{F}}(\mathcal{Y}_g(\mathcal{Y}_0)){\cup} \tilde{\textbf{F}}(\mathcal{Y}_g(\textbf{y})))\,.
\end{align} 
Conditioning can never increase entropy so $$H(\boldsymbol {\Theta}\vert\tilde{\textbf{F}}(\mathcal{Y}_g(\mathcal{Y}_0))) \geq H(\boldsymbol {\Theta}\vert\tilde{\textbf{F}}(\mathcal{Y}_g(\mathcal{Y}_0)){\cup} \tilde{\textbf{F}}(\mathcal{Y}_g(\textbf{y})))$$ and the result follows.
\end{proof}

\begin{algorithm}[t]
	\caption{Greedy Submodular Maximization}
	\label{al:greedy_max}
	\begin{algorithmic}
		\STATE $\mathcal{S}\leftarrow \emptyset$
		\FOR{$i=1$ to $K$}
		\STATE $x^{\star}=\text{argmax}_{x\in \mathcal{V}\setminus \mathcal{S}}\,G(\mathcal{S}\cup \left\lbrace x \right\rbrace )\quad$ See Equation \eqref{eq:algorithm_prog} 
		\STATE $\mathcal{S} \leftarrow \mathcal{S} \cup \left\lbrace x^{\star}\right\rbrace$
		\ENDFOR
		\STATE Return $\mathcal{S}$
	\end{algorithmic}
\end{algorithm}

The class of submoudlar combinatorial optimization problems has been extensively studied in the past \cite{fujishige2005submodular}. While submodular optimization problems are known to be NP-hard, it is known that the computationally efficient greedy solver delineated in algorithm \ref{al:greedy_max} is guaranteed to achieve a good approximation (up to a constant factor) to the optimal solution \cite{minoux1978accelerated, nemhauser1978analysis}, as stated in the next lemma:
\begin{lemma}[Nemhauser \cite{nemhauser1978analysis}] \label{lem:nemhauser} Let $G$ be a monotonic, submodular set function. Let $\mathcal{S}^{\star}=\underset{\mathcal{S}\subseteq \mathcal{V},\vert \mathcal{S}\vert\leq K}{\text{argmax}}\,G(\mathcal{S})$ be an optimal solution and $\mathcal{S}^{\text{gr}}$ a set retrieved by the greedy maximization algorithm \ref{al:greedy_max}. We have the following guarantee for the performance of the greedy algorithm:
$$G(\mathcal{S}^{\text{gr}})\geq (1-e^{-1})G(\mathcal{S}^{\star})$$
Moreover, no polynomial time algorithm can provide a better approximation guarantee
unless P=NP \cite{feige1998threshold}.
\end{lemma}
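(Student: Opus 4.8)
The plan is to follow the classical Nemhauser--Wolsey--Fisher analysis of the greedy algorithm, which reduces to a single per-iteration inequality. First I would normalize so that $G(\emptyset)=0$; this is automatic in our setting since $I(\boldsymbol{\Theta};\emptyset)=0$, and it is harmless in general because the guarantee is purely relative. Let $\mathcal{S}_0=\emptyset,\mathcal{S}_1,\ldots,\mathcal{S}_K=\mathcal{S}^{\text{gr}}$ denote the nested sets produced by Algorithm \ref{al:greedy_max}, with $\mathcal{S}_i=\mathcal{S}_{i-1}\cup\left\lbrace x_i^\star\right\rbrace$ the greedy pick at step $i$, and write $\delta_i\equiv G(\mathcal{S}_i)-G(\mathcal{S}_{i-1})$ for its marginal gain. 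Let $\mathcal{S}^\star=\left\lbrace v_1,\ldots,v_m\right\rbrace$ with $m\le K$ be an optimal solution.

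The crux is to establish, for every $i$, the bound
\[
G(\mathcal{S}^\star)-G(\mathcal{S}_{i-1})\le K\,\delta_i\,.
\]
To obtain it I would combine the two structural hypotheses. By monotonicity, $G(\mathcal{S}^\star)\le G(\mathcal{S}^\star\cup\mathcal{S}_{i-1})$, and I would then expand the gap $G(\mathcal{S}^\star\cup\mathcal{S}_{i-1})-G(\mathcal{S}_{i-1})$ by inserting the elements $v_1,\ldots,v_m$ one at a time, telescoping it into a sum of marginal increments. Submodularity (decreasing marginals) bounds each increment by the marginal gain of inserting the single element $v_j$ directly into $\mathcal{S}_{i-1}$, and because the greedy rule always selects the element of maximal marginal gain, each such quantity is at most $\delta_i$; summing the $m\le K$ terms yields the displayed inequality. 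I expect this per-step estimate to be the main obstacle, or rather the only genuine content of the argument: everything downstream is a mechanical recursion, whereas here one must use monotonicity and submodularity in tandem and exploit the maximality of the greedy choice.

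With the key inequality in hand, I would track the optimality gap $\Delta_i\equiv G(\mathcal{S}^\star)-G(\mathcal{S}_i)$. Rearranging gives $\delta_i\ge \Delta_{i-1}/K$, and since $\Delta_i=\Delta_{i-1}-\delta_i$, this produces the contraction $\Delta_i\le(1-1/K)\Delta_{i-1}$. Iterating over the $K$ greedy steps from $\Delta_0=G(\mathcal{S}^\star)$ yields $\Delta_K\le(1-1/K)^K G(\mathcal{S}^\star)$, and the elementary bound $(1-1/K)^K\le e^{-1}$ then gives $G(\mathcal{S}^\star)-G(\mathcal{S}^{\text{gr}})\le e^{-1}G(\mathcal{S}^\star)$, i.e. $G(\mathcal{S}^{\text{gr}})\ge(1-e^{-1})G(\mathcal{S}^\star)$. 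The matching inapproximability claim I would not reprove; it follows from Feige's hardness result for maximum coverage \cite{feige1998threshold}, which already exhibits monotone submodular instances for which improving on the factor $1-e^{-1}$ in polynomial time would imply $\mathrm{P}=\mathrm{NP}$.
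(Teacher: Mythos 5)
Your proof is correct, and it is precisely the classical Nemhauser--Wolsey--Fisher argument: the paper itself offers no proof of this lemma, simply citing \cite{nemhauser1978analysis} for the $(1-e^{-1})$ guarantee and \cite{feige1998threshold} for the matching hardness, so your write-up supplies exactly the argument behind the citation. The key per-iteration inequality $G(\mathcal{S}^\star)-G(\mathcal{S}_{i-1})\le K\,\delta_i$, obtained by telescoping over the elements of $\mathcal{S}^\star$ and invoking decreasing marginals plus the maximality of the greedy pick, and the ensuing contraction $\Delta_i\le(1-1/K)\Delta_{i-1}$ are all standard and correctly executed; your remarks on normalizing $G(\emptyset)=0$ (which holds here since the mutual information of the empty design vanishes) and on deferring the inapproximability claim to Feige are both appropriate.
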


To determine the computational complexity of Algorithm \ref{al:greedy_max} notice that we have $K$ iterations, where in each iteration we evaluate $\vert \mathcal{V}\setminus \mathcal{S} \vert \approx \vert \mathcal{V}\vert$ candidate sets of the form $G(\mathcal{S}\cup \left\lbrace x \right\rbrace)$ to determine the one of biggest value. The computational cost is thus $O(K\vert \mathcal{V} \vert C)$ where $C$ is the cost of evaluating candidate sets. For example, for our application $C$ is the cost of a single evaluation of Equation \eqref{eq:algorithm_prog} determining the mutual information of a candidate set.

\paragraph{Proposed Method}
Applied to our setting, the Algorithm \ref{al:greedy_max} performs successive evaluations of the proxy function $\tilde{G}(\cdot)$ for candidate sets $\mathcal{Y}_0^C\equiv\mathcal{Y}_0\cup \left\lbrace \textbf{y} \right\rbrace$ where $\textbf{y} \in \mathcal{Y}\setminus \mathcal{Y}_0$. During the $\textit{k}^{\rm{th}}$ iteration the candidate sets $\mathcal{Y}_0^C$ are of size $k$. We utilize the following identity to facilitate the flow of the algorithm:
\begin{align} \label{eq:algorithm_prog}
\tilde{G}(\mathcal{Y}_0^C)=&H(\tilde{\textbf{F}}(\mathcal{Y}_g(\mathcal{Y}_0^C))){-}H(\tilde{\textbf{F}}(\mathcal{Y}_g(\mathcal{Y}_0^C))\vert \boldsymbol {\Theta})\notag\\
=&\log((\pi e)^{kT} \text{det}\boldsymbol \Sigma_g)-\log((\pi e)^{kT} \text{det}\boldsymbol \Sigma_{g\vert \boldsymbol{\Theta}})=\log(\text{det}\boldsymbol \Sigma_g)-\log(\text{det}\boldsymbol \Sigma_{g\vert \boldsymbol{\Theta}})
\end{align}
In the above, $\boldsymbol\Sigma_g$ and $\boldsymbol \Sigma_{g\vert \boldsymbol{\Theta}}$ are the covariance matrices for the ensemble of $kT$ samples $\tilde{\textbf{F}}(\mathcal{Y}_g(\mathcal{Y}_0^C))$, taken without and with conditioning on the feature space coefficients $\boldsymbol {\Theta}$, respectively. Notice that conditioned on $\boldsymbol{\Theta}$ the measurements covariance matrix $\boldsymbol \Sigma_{g\vert \boldsymbol{\Theta}}$ is block-diagonal with block submatrices being the noise covariance matrix, and the no-conditioning covariance matrix $\boldsymbol\Sigma_g$ can be retrieved by adding the aforementioned noise matrix to the corresponding kernel covariance matrix $k(\tilde{\textbf{F}}(\mathcal{Y}_g(\mathcal{Y}_0^C)),\tilde{\textbf{F}}(\mathcal{Y}_g(\mathcal{Y}_0^C)))$. 

Denoting the result of running the greedy maximization algorithm \ref{al:greedy_max} on the proxy function $\tilde{G}(\mathcal{Y}_0)$ with $\tilde{\mathcal{Y}}_0^{\textbf{gr}}$ we have our final result:
\begin{theorem} Let the maximum covariance discrepancy between the true and approximate models be bounded according to $$\forall k_1,k_2,i_1,i_2: \left\lvert k(\textbf{y}^{(k_1)}(t_{i_1}),\textbf{y}^{(k_2)}(t_{i_2}))-k(\textbf{y}_G^{(k_1)}(t_{i_1}),\textbf{y}_G^{(k_2)}(t_{i_2})) \right\rvert\leq \delta$$ then we have 	$${G}(\tilde{\mathcal{Y}}_0^{\textbf{gr}})\geq (1-e^{-1})(G(\mathcal{Y}_0^{\star})+O(\log(1-\text{const}\cdot\delta)))$$
\end{theorem}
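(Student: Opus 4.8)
The plan is to combine the three ingredients already in place: the uniform approximation bound of Theorem~\ref{thm:1}, the monotonicity and submodularity of $\tilde{G}$, and Nemhauser's guarantee (Lemma~\ref{lem:nemhauser}). The crucial conceptual point is that because the true map $\mathcal{Y}_0 \to \mathcal{Y}_m$ is unavailable, the greedy algorithm can only be executed on the computable proxy $\tilde{G}$, so its approximation guarantee is naturally stated with respect to $\tilde{G}$ rather than $G$; Theorem~\ref{thm:1} is precisely what lets us transfer that guarantee back to the true objective $G$ at a controlled additive cost.

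Write $\eta \equiv -d\tilde{K}\log\bigl(1-\frac{\delta(d\tilde{K})^{3/2}}{\sigma_{\min}(\boldsymbol\Sigma_\epsilon)}\bigr)\ge 0$ for the uniform error furnished by Theorem~\ref{thm:1}, so that $|\tilde{G}(\mathcal{Y}_0)-G(\mathcal{Y}_0)|\le\eta$ holds for \emph{every} feasible $\mathcal{Y}_0$. Let $\tilde{\mathcal{Y}}_0^{\star}$ denote the maximizer of the proxy from \eqref{eq:opt_prob_exp_des_app}. I would then assemble the chain
\begin{align*}
G(\tilde{\mathcal{Y}}_0^{\textbf{gr}})
&\ge \tilde{G}(\tilde{\mathcal{Y}}_0^{\textbf{gr}})-\eta
\ge (1-e^{-1})\,\tilde{G}(\tilde{\mathcal{Y}}_0^{\star})-\eta \\
&\ge (1-e^{-1})\,\tilde{G}(\mathcal{Y}_0^{\star})-\eta
\ge (1-e^{-1})\bigl(G(\mathcal{Y}_0^{\star})-\eta\bigr)-\eta,
\end{align*}
where the first and last steps are Theorem~\ref{thm:1} applied at $\tilde{\mathcal{Y}}_0^{\textbf{gr}}$ and at $\mathcal{Y}_0^{\star}$ respectively, the second step is Lemma~\ref{lem:nemhauser} applied to the monotone submodular $\tilde{G}$, and the third step uses that $\tilde{\mathcal{Y}}_0^{\star}$ maximizes $\tilde{G}$ while $\mathcal{Y}_0^{\star}$ is merely feasible for it. Collecting terms gives $G(\tilde{\mathcal{Y}}_0^{\textbf{gr}})\ge (1-e^{-1})G(\mathcal{Y}_0^{\star})-(2-e^{-1})\eta$. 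Since $-\eta = d\tilde{K}\log(1-\text{const}\cdot\delta)=O(\log(1-\text{const}\cdot\delta))$ with $\text{const}=(d\tilde{K})^{3/2}/\sigma_{\min}(\boldsymbol\Sigma_\epsilon)$, factoring out $(1-e^{-1})$ absorbs the harmless prefactor $(2-e^{-1})/(1-e^{-1})$ into the $O(\cdot)$ and reproduces the claimed form exactly.

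There is no genuinely hard step here; the argument is essentially bookkeeping, so the thing to get right is the direction of each inequality and the fact that the approximation bound is invoked \emph{twice} — once at the greedy output and once at the true optimum — which is what produces the $(2-e^{-1})\eta$ slack rather than a single $\eta$. I would also note that the statement is only meaningful in the regime $\delta(d\tilde{K})^{3/2}/\sigma_{\min}(\boldsymbol\Sigma_\epsilon)<1$ where Theorem~\ref{thm:1} yields a finite bound, and that as $\delta\to 0$ the penalty term vanishes, recovering the clean $(1-e^{-1})$ guarantee for the true, uncomputable objective $G$.
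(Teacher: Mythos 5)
Your proof is correct and follows exactly the route the paper intends: the paper's own proof is the one-line ``Immediate from Lemma~\ref{lem:nemhauser} and Theorem~\ref{thm:1},'' and your chain of inequalities (Theorem~\ref{thm:1} at the greedy output, Nemhauser on the monotone submodular proxy $\tilde{G}$, optimality of $\tilde{\mathcal{Y}}_0^{\star}$ for $\tilde{G}$, and Theorem~\ref{thm:1} again at $\mathcal{Y}_0^{\star}$) is precisely the bookkeeping that one-liner elides. Your explicit $(2-e^{-1})\eta$ slack and the remark about the regime $\delta(d\tilde{K})^{3/2}/\sigma_{\min}(\boldsymbol\Sigma_\epsilon)<1$ are, if anything, more careful than the original.
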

\begin{proof}
	Immediate from Lemma \ref{lem:nemhauser} and Theorem \ref{thm:1}.
\end{proof}
The last theorem demonstrates that applying the greedy maximization algorithm on the proxy function $\tilde{G}(\cdot)$ retrieves a solution $\tilde{\mathcal{Y}}_0^{\textbf{gr}}$ which is near optimal for the original function $G(\cdot)$, which is what we want.

\subsection{Leveraging Additional Techniques  in Submodular Optimization}
In this section we briefly survey additional results of interest from the literature on submodular maximization.
\subsubsection{Lazy Greedy Submodular Maximization}
The computational complexity of the greedy algorithm, while tractable in many settings, can be driven down further using the submodularity property of the set function. The so called Lazy greedy maximization algorithm (algorithm \ref{al:lazy_greedy_max}) which relies on the submodulairty of $G$ is often found to empirically decrease running time by orders of magnitude \cite{minoux1978accelerated}. Our numerical experiments of Section \ref{sec:numerical_experiments} utilize this algorithm for all relevant simulations.
	\begin{algorithm}
		\caption{Lazy Greedy Submodular Maximization}
		\label{al:lazy_greedy_max}
		\begin{algorithmic}[1]
			\STATE $\mathcal{S}\leftarrow \emptyset, \quad \forall x\in \mathcal{V} : m[x]\leftarrow \infty$
			\FOR{$i=1$ to $K$}
			\STATE $\text{STOP}\leftarrow 0$
			\WHILE {$\sim\text{STOP}$}
			\STATE $x^{\star}=\text{argmax}_{x\in \mathcal{V}\setminus \mathcal{S}}\,G(\mathcal{S}\cup \left\lbrace x \right\rbrace )$
			\STATE $m[x^\star]=G(\mathcal{S}\cup \left\lbrace x \right\rbrace )-G(\mathcal{S})$
			\IF{$m[x^\star]\geq \text{argmax}_{x}\;m[x] $}
			\STATE $\text{STOP}\leftarrow 1$
			\ENDIF
			\ENDWHILE
			\STATE $\mathcal{S} \leftarrow \mathcal{S} \cup \left\lbrace x^{\star}\right\rbrace $
			\ENDFOR
			\STATE Return $\mathcal{S}$
		\end{algorithmic}
	\end{algorithm}
\subsubsection{Submodular Maximization with Matroid Constraints}
We identified our approximated experimental design problem \eqref{eq:opt_prob_exp_des_app} as one of maximizing a submodular function under a cardinality constraint on a subset of $\mathcal{Y}$. With the argument identified as submodular we can define variants of the cardinality constrained problem that may be of interest in applications and retain the efficient approximation property of \eqref{eq:opt_prob_exp_des_app}.

We briefly mention submodular maximization with matroid constraints \cite{krause2012submodular},\cite{calinescu2007maximizing}, where in lieu of \eqref{eq:opt_prob_exp_des_app} we solve:
\begin{align} \label{eq:opt_prob_exp_des_app_matroid}
{Y}_0^\star= \underset{\mathcal{Y}_0: \mathcal{Y}_0\in\mathcal{I}}{\text{argmax}}\; \tilde{G}(\mathcal{Y}_0)
\end{align}
and $\mathcal{I}$ is a matroid combinatorial structure \cite{oxley2006matroid}. Matroids can concisely capture complicated constraints on $\mathcal{Y}_0$, for example let $\lbrace\mathcal{Y}^i \rbrace$ be a partition of $\mathcal{Y}$, i.e. $\bigcup_i \mathcal{Y}^i=\mathcal{Y}$, $\forall i\neq j: \mathcal{Y}^i\bigcap \mathcal{Y}^j=\emptyset$. With the partition in place, a constraint on $\mathcal{Y}_0$ of the form $\mathcal{Y}_0 \bigcap \mathcal{Y}^i \leq K_i$ can be shown to be a matroid constraint of the form $\mathcal{Y}_0\in\mathcal{I}$. A constraint like this is useful for designing experiments to learn misspecified models where we cannot choose more than a limited number $K_i$ of initial conditions to lie in any specific region $\mathcal{Y}^i$, e.g. due to some physical impediment for repeating experiments with similar conditions. It may be shown that an efficient greedy algorithm can approximate the optimal solution of problems such as those mentioned despite the exact problem being generally NP-hard.
	\section{Numerical Experiments} \label{sec:numerical_experiments}
In this section we discuss results of numerical experiments validating and demonstrating our techniques.
\subsection{Correction Term Fitting via GP Regression} \label{sec:num_gp}
For the first experiment we consider a misspecified system in $d=2$ dimensions where the known component is a fixed linear (matrix) operator, $\textbf{G}(\textbf{y}(t))=\textbf{A}\textbf{y}(t)$  with 
$$\textbf{A} = \left[ \begin{matrix} +0.02&+0.10\\ -0.10&-0.06 \end{matrix}\right], $$ and the misspecified component is set according to $\textbf{F}([y_1,y_2]^\top)=[0.01y_1^2, 0.01y_2^2]^\top$. We observe the system evolution over the time span $t\in [0,6]$, collecting $T=11$ equally-spaced time samples per experiment. The sampled time evolution sequences $\textbf{y}^{(k)}(t)$ were computed exactly, and we have measured noisy samples $\tilde{\textbf{F}}(\cdot)$ along the evolution path as per the observation model \eqref{eq:noisy_measurements}, where the measurement noise was taken as $\boldsymbol{\Sigma}_{\epsilon}=\sigma_{\epsilon}^2 \textbf{I}$ with $\sigma_{\epsilon}^2=10^{-4}$.

Figure \ref{fig:time_evolution} (left) depicts $K=40$ trajectories $\textbf{y}(t)$ (solid lines) induced by a set $\mathcal{Y}_0$  of initial conditions (black dots). Elements $\textbf{y}\in\mathcal{Y}_0$ were drawn from a  uniform distribution over the square $\mathcal{D}=[-1,+1]{\times}[-1,+1]$. For comparison, we overlay the corresponding trajectories of the misspecified model $\textbf{y}_G(t)$ taking into account solely the linear driving term $\textbf{G}(\cdot)$ (dashed lines).

\begin{figure*}
	\makebox[\textwidth][c]{\includegraphics[width=1.4\textwidth]{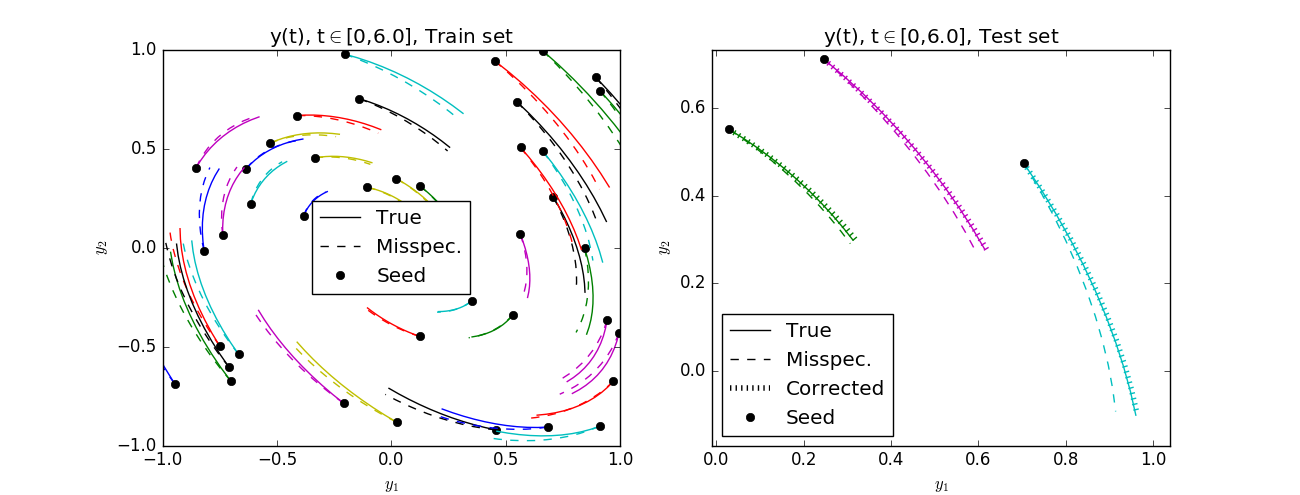}}
	\caption{Time evolution of system output. (left) Training set, with actual evolution in solid lines and misspecified predictions in dashed lines (right) Test set, with corrected predictions overlaid.} \label{fig:time_evolution}
\end{figure*}

For the GP regression we use a Gaussian kernel with $\sigma_w^2=1.0$ scaled for local variance $\frac{1}{\vert\mathcal{D}\vert}\iint_{\mathcal{D}}\vert F_{1}(\textbf{y})\vert^2d\textbf{y}=\frac{1}{\vert\mathcal{D}\vert}\iint_{\mathcal{D}}\vert F_{2}(\textbf{y})\vert^2d\textbf{y}=4\cdot 10^{-5}$. Figure \ref{fig:correction_term_error} depicts the estimation error $\|\hat{\textbf{F}}(\textbf{y})-\textbf{F}(\textbf{y})\|_2$ for $\textbf{y}\in \mathcal{D}$, overlaid with the training sequences. As is evident from these plots the estimation fidelity is high in the regions where training data is readily available.

Finally, in Figure \ref{fig:time_evolution} (right) the estimated correction term $\hat{\textbf{F}}(\cdot)$ was used to test prediction performance over some arbitrary set of initial conditions, and compare to the misspecified predicted evolution. The corrected curves (striped lines) are evidently closer to the true paths (solid lines) compared to the misspecified predictions (dashed lines).
\begin{figure*}
	\makebox[\textwidth][c]{\includegraphics[width=0.7\textwidth]{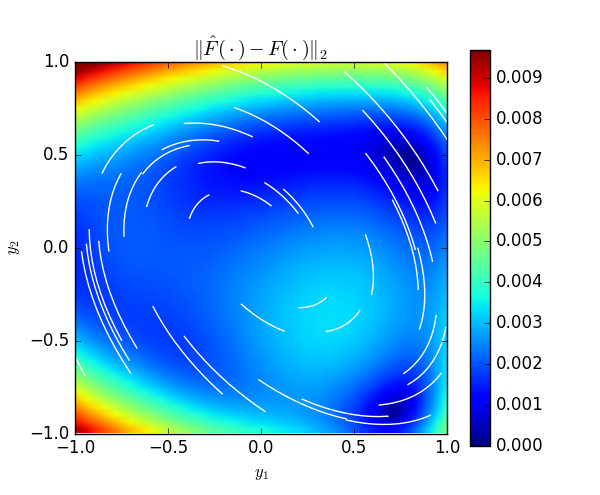}}
	\caption{Estimation error $\|\hat{\textbf{F}}(\cdot)-\textbf{F}(\cdot)\|_2$. White overlaid traces depict the training set time evolution sequences.} \label{fig:correction_term_error}
\end{figure*}

\subsection{Experimental Design for a Dynamical System}
In this subsection we experiment with and implement the experimental design procedures detailed in Section \ref{sec:experimental_design}. We are interested in designing a succession of $K=9$ experiments. The experimental design entails selecting an optimal set $\mathcal{Y}_0 \subseteq \mathcal{Y}$ of initial conditions from which to start the system off. With the misspecified system as defined in the previous subsection, we take the possible selection set $\mathcal{Y}$ to be a uniformly spaced two dimensional $13{\times} 13$ grid in $\mathcal{D}=[-1,+1]{\times}[-1,+1]$ as depicted in Figure \ref{fig:time_evolution_experimental} (left). We implement the lazy greedy algorithm and design an approximately optimal selection set $\mathcal{Y}_0$, marked with black squares in Figure \ref{fig:time_evolution_experimental} (left). Performance is compared to a seed of equal size chosen randomly over $\mathcal{Y}$ marked in black circles. Prediction performance over some arbitrary test set of initial conditions is presented in Figure \ref{fig:time_evolution_experimental} (right) and a heat map for the estimation error in $\hat{\textbf{F}}(\cdot)$ is plotted in Figure \ref{fig:correction_term_error_experimental}.

\begin{figure*}
\makebox[\textwidth][c]{\includegraphics[width=1.4\textwidth]{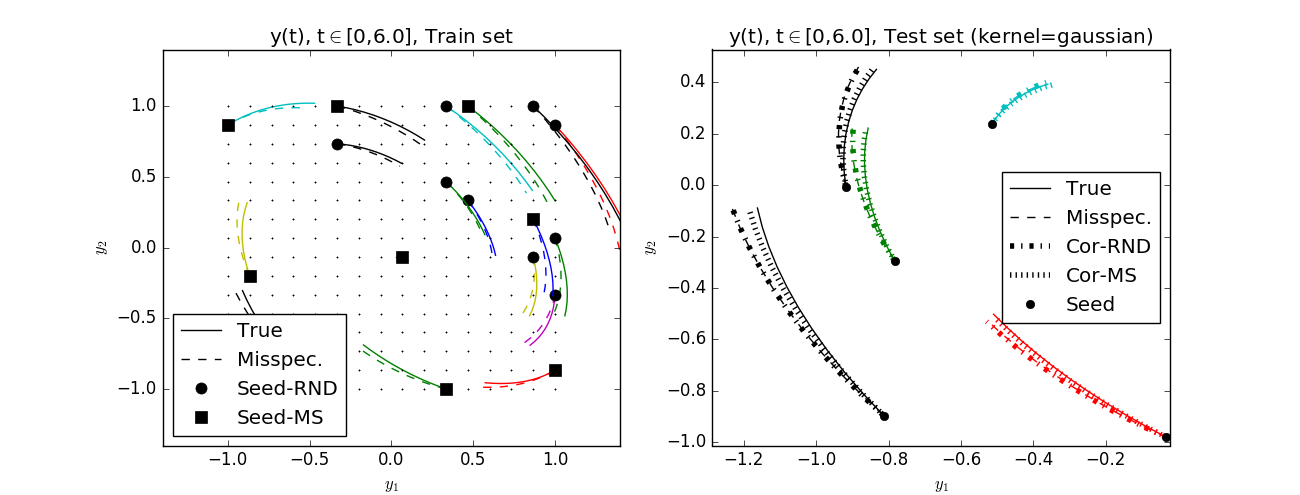}}
\caption{Experimental design setup. (left) Training data collected in two setups, first random and second based on designing experiments to match the misspecified dynamics. (right) example of prediction test on some arbitrary initial conditions.} \label{fig:time_evolution_experimental}
\end{figure*}

\begin{figure*}
\makebox[\textwidth][c]{\includegraphics[width=1\textwidth]{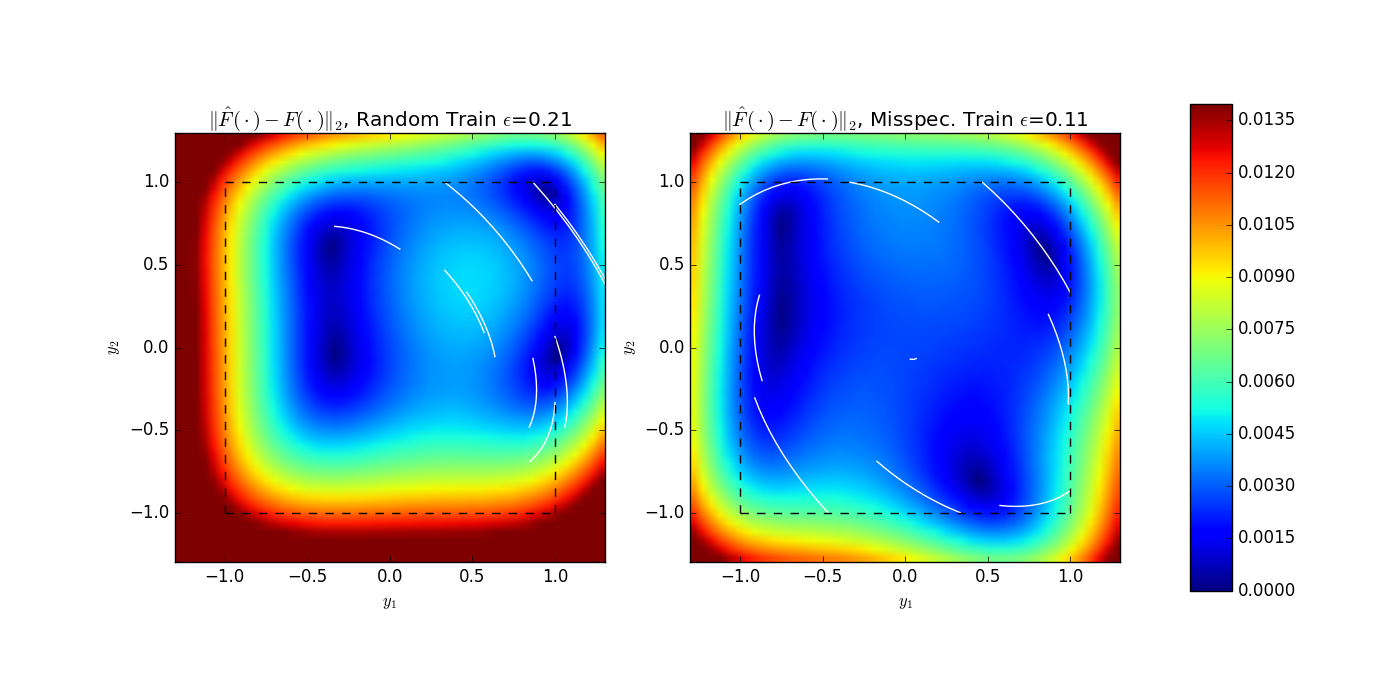}}
\caption{Absolute error in the correction term $\vert\hat{F}_i(\cdot)-{F}_i(\cdot)\vert$. (top) Random initial conditions (bottom) Experimental design.} \label{fig:correction_term_error_experimental}
\end{figure*}

Our next experiment involved changing the training set size, keeping track of estimation performance as measured according to $\iint_{\mathcal{D} }\|\hat{\textbf{F}}(\textbf{y})-\textbf{F}(\textbf{y})\|_2d\textbf{y}$ (estimated via numerical integration). Our dynamical system is as previously described, and we compare several correction strategies as summarized in Figure \ref{fig:correction_term_error}.

The first comparison is against a fully data driven estimator, which has no knowledge (not even approximate) of the system dynamics. We use training sequences as determined by our misspecified experimental design procedure but learn the full dynamics by applying GP regression with a Gaussian RBF kernel of scaled power $10^{-2}$ (due to the higher energy of the unknown function when the entire driving term is to be learned) and estimate the full system dynamics. The two other estimators are the ones previously described, namely estimating just the correction component using the knowledge about the approximate (misspecified) system dynamics, done once with a random seed training set and again with a training set seeded by a choice of initial conditions determined according to our misspecified experimental design procedure. 

The results are averaged over $10$ realizations of this setup. Also for comparison we show the energy of the correction term $\iint_{\mathcal{D}}\textbf{F}(\textbf{y})d \textbf{y}$ and the energy of the entire dynamics term $\iint_{\mathcal{D}}\left[\textbf{F}(\textbf{y})+\textbf{G}(\textbf{y})\right] d \textbf{y}$ which quantify the effective error associated with the misspecified and the completely unknowable models.


\begin{figure*}
	\makebox[\textwidth][c]{\includegraphics[width=1\textwidth]{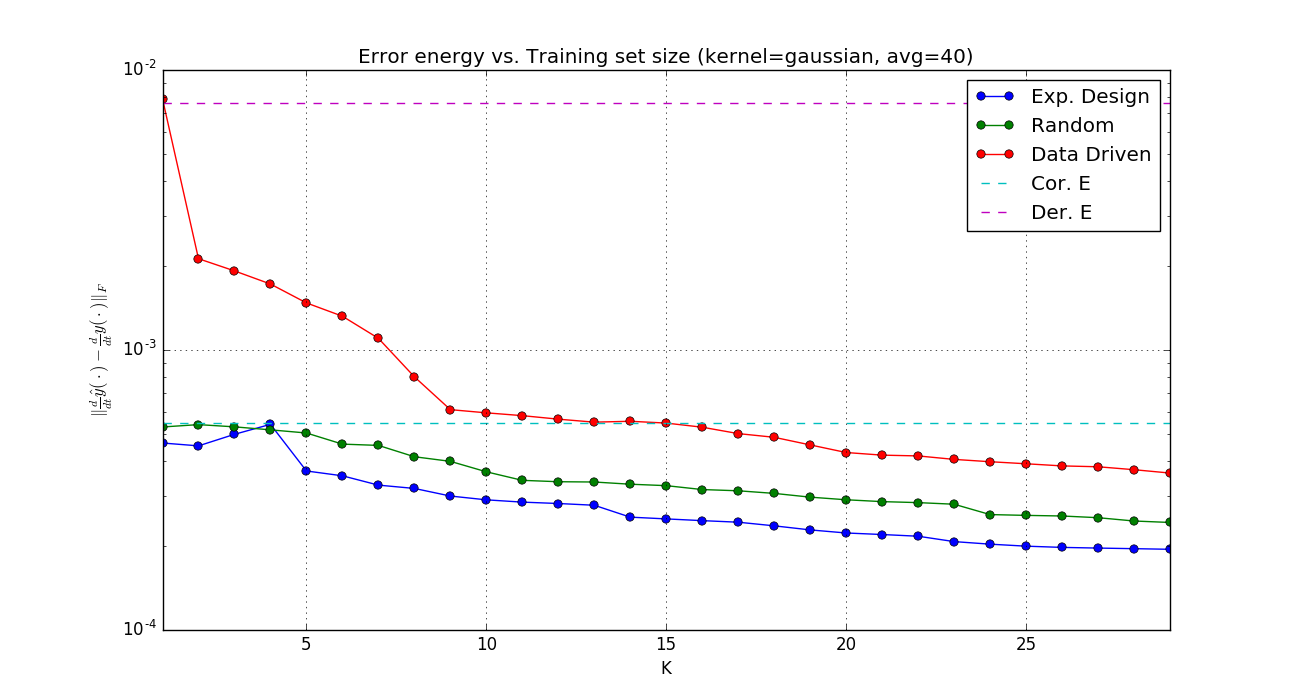}}
	\caption{Estimation error vs. training set size.} \label{fig:training_set_size}
\end{figure*}

Evidently the fully data driven approach is always the worst as it ignores the data embedded in the approximated model. However, with increasing number of experiments the difference between this approach and the ones taking into account the approximate dynamics tends to diminish, as the data becomes abundant and no prior assumptions about the model are needed. The approach taking into account the known component in designing the experimental setup is superior as it utilizes all available knowledge. The random training ignoring the known dynamics component incurs a cost in terms of estimation performance compared to the experimental design approach. 

\subsection{A Misspecified Gravitational Field}
Experimental design is crucial when the cost of experiments is high. One plausible scenario is in which a gravitational field
is estimated by controlled experiments of placing an object and observing its free fall (such experiments are likely to 
be costly). Accurate models of gravitational field can be useful in planning satellite trajectories around a planet. 
We use an artificial simplified simulation of the above in which we explore a problem of motion in a two-dimensional gravitational field.
If the gravitational field around the planet is fully characterized then this motion can be easily simulated through the laws of mechanics. In our setting we assume that the gravitational field is not fully known, in reality this could happen due to e.g. nonuniform mass distribution for the planet or gravitational influence from other nearby heavy masses \cite{muller1968mascons,rossi1999orbital,rosson2016orbital}.

Concretely, The two dimensional space is populated with a set of fixed objects, e.g. stars, with the $\textit{i}$th object having mass $m_i$ and position $\boldsymbol{x}^{i}$ and we are interested in solving for the motion of some free-moving unit mass, i.e. a satellite, in the corresponding gravitational field. Let $\boldsymbol{x}(t)=[x_1(t),x_2(t)]^{T}$ be the coordinate vector of the free-moving unit mass. The equations of motion governing the time evolution of $\boldsymbol{x}(t)$ are prescribed by classical mechanics and given according to \cite{goldstein1965classical}:
\begin{align}
\frac{d^2}{dt^2}\boldsymbol{x}(t)=-\sum\limits_{i}m_i\frac{\boldsymbol{x}(t)-\boldsymbol{x}^{i}}{\|\boldsymbol{x}(t)-\boldsymbol{x}^{i}\|^3}
\end{align}
This is a second order ODE expressing Newton's second law of motion and the gravitational field force. Namely, the acceleration experienced by the satellite is equal to the sum of forces acting on it. The force exerted on the satellite by the $\textit{i}$th mass is aligned with the vector connecting the two and is directly proportional to $m_i$ and inversely proportional to the squared distance between them.

The second order ODE may be converted into first order form by introducing new variables and defining the transformation  
\begin{equation}
[y_1(t),y_2(t),y_3(t),y_4(t)]^{\top}\equiv[x_1(t),x_2(t),\frac{d}{dt}x_1(t),\frac{d}{dt}x_2(t)]
\end{equation}

In the new variables the equations of motion read:
\begin{align}
\frac{d}{dt}y_1(t)&=y_3(t)\\
\frac{d}{dt}y_2(t)&=y_4(t)\\
\frac{d}{dt}y_3(t)&=-\sum\limits_{i}m_i\frac{y_1(t)-{x}_{1}^i}{\|[y_1(t),y_2(t)]^{\top}-\boldsymbol{x}^{i}\|^3}\\
\frac{d}{dt}y_4(t)&=-\sum\limits_{i}m_i\frac{y_2(t)-{x}_{2}^i}{\|[y_1(t),y_2(t)]^{\top}-\boldsymbol{x}^{i}\|^3}
\end{align}
which is a first order system of ODE as in \eqref{eq:mis_specified_model}. 

We consider a known but misspecified model that takes into account a single fixed mass in the origin with $m_1=0.2$ and $\boldsymbol{x}^1=[0,0]^\top$. The true model however includes two additional masses $m_2=0.1,m_3=0.4$ and $\boldsymbol{x}^2=[0,4]^\top,\boldsymbol{x}^3=[0.5,3.8]^\top$. With these symbols, we have
\begin{align}
\textbf{G}(\textbf{y}(t))&{=}\left[y_3(t),y_4(t),\frac{{-}m_1(y_1(t){-}{x}_{1}^1)}{\|[y_1(t),y_2(t)]^{\top}{-}\boldsymbol{x}^{1}\|^3},\frac{{-}m_1(y_2(t){-}{x}_{2}^1)}{\|[y_1(t),y_2(t)]^{\top}{-}\boldsymbol{x}^{1}\|^3}\right]^{\top}\\
\textbf{F}(\textbf{y}(t))&{=}\left[0,0,\sum\limits_{i=2,3}\frac{-m_i(y_1(t){-}{x}_{1}^i)}{\|[y_1(t),y_2(t)]^{\top}{-}\boldsymbol{x}^{i}\|^3},\sum\limits_{i=2,3}\frac{{-}m_i(y_2(t){-}{x}_{2}^i)}{\|[y_1(t),y_2(t)]^{\top}{-}\boldsymbol{x}^{i}\|^3}\right]^{\top}
\end{align}

For this experiment the signal $\textbf{y}(t)$ is 4 dimensional such that at any moment it captures the location as well as vector velocity of the satellite. Similarly, initial conditions are specified in this four dimensional space.
 
We limit our attention to correction functions of the functional form $\textbf{F}([y_1,y_2,y_3,y_4])=[0,0,\textbf{F}_{3,4}([y_1,y_2])]^{\top}$, i.e. the gravitational field correction is strictly a function of the spatial coordinates $(y_1,y_2)$, and has only two unknown components. We thus consider the problem of estimating $\textbf{F}_{3,4}:\mathbb{R}^2\rightarrow\mathbb{R}^2$, and our results and techniques naturally carry over to this scenario.
 
For the kernel we use a Gaussian RBF with $\sigma^2_k=1.0$ scaled for local variance ${10}^{-3}$ and the measurement noise is $\boldsymbol\Sigma_\epsilon=10^{-4}\textbf{I}$. Experiments run in the time frame $t\in[0,3.0]$ and $T=20$ data samples are collected per experiment. The selection set $\mathcal{Y}$ is a set of size $\vert\mathcal{Y}\vert=300$ of initial conditions, whose spatial coordinates $(y_1,y_2)$ are depicted in Figure \ref{fig:gravity_prediction} (left) in addition to the mass configuration in space.
Also shown are training sets of size $K=7$ as selected via an agnostic experimental design procedure and a misspecified aided one. In Figure \ref{fig:gravity_prediction} (right) we showcase prediction performance on a random test set. Both the agnostic and the misspecified designs perform well here compared to the misspecified predictions.
 
Figure \ref{fig:gravity_error} plots the estimation error of $\hat{\textbf{F}}_{3,4}(\cdot)$ for the setup above for the agnostic design (left) and the misspecified guided design (right) which performs slightly better when compared according to the mean squared error over the domain of interest $\mathcal{D}$ delineated inside the dashed line.
 
Finally in Figure \ref{fig:gravity_comparison} we compare the mean square error for the two methods as a function of $K$, as determined empirically by averaging the results of $400$ noise realizations. For reference, the dashed red line depicts the mean energy in the unknown term $\textbf{F}_{3,4}(\cdot)$.
\begin{figure*}
\makebox[\textwidth][c]{\includegraphics[width=1.4\textwidth]{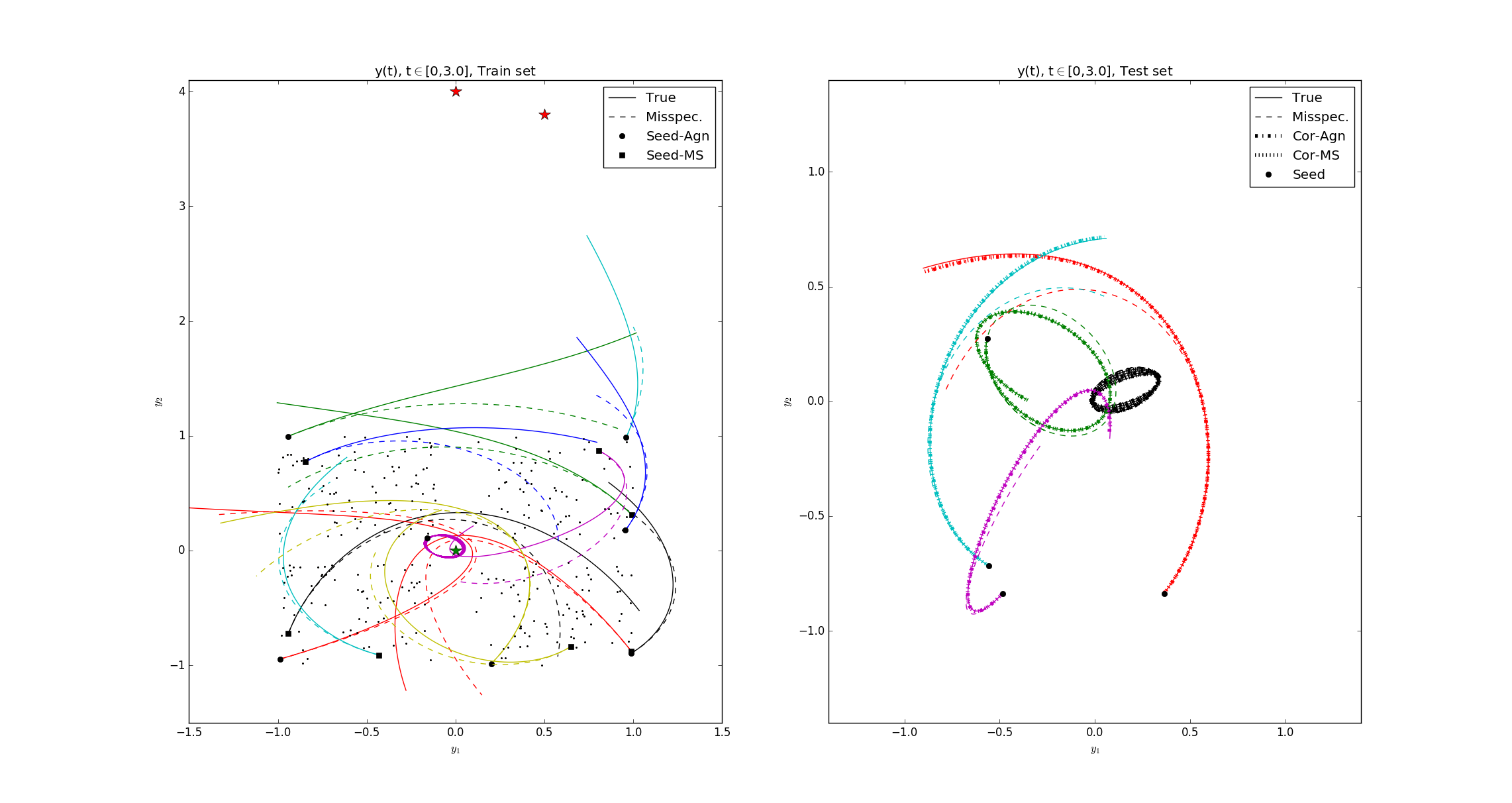}}
\caption{Experimental design in a misspecified gravitational field. Training set as determined via an agnostic approach and a misspecified aided approach (left) and prediction over a random test set (right).} \label{fig:gravity_prediction}
\end{figure*}
\begin{figure*}
\makebox[\textwidth][c]{\includegraphics[width=1\textwidth]{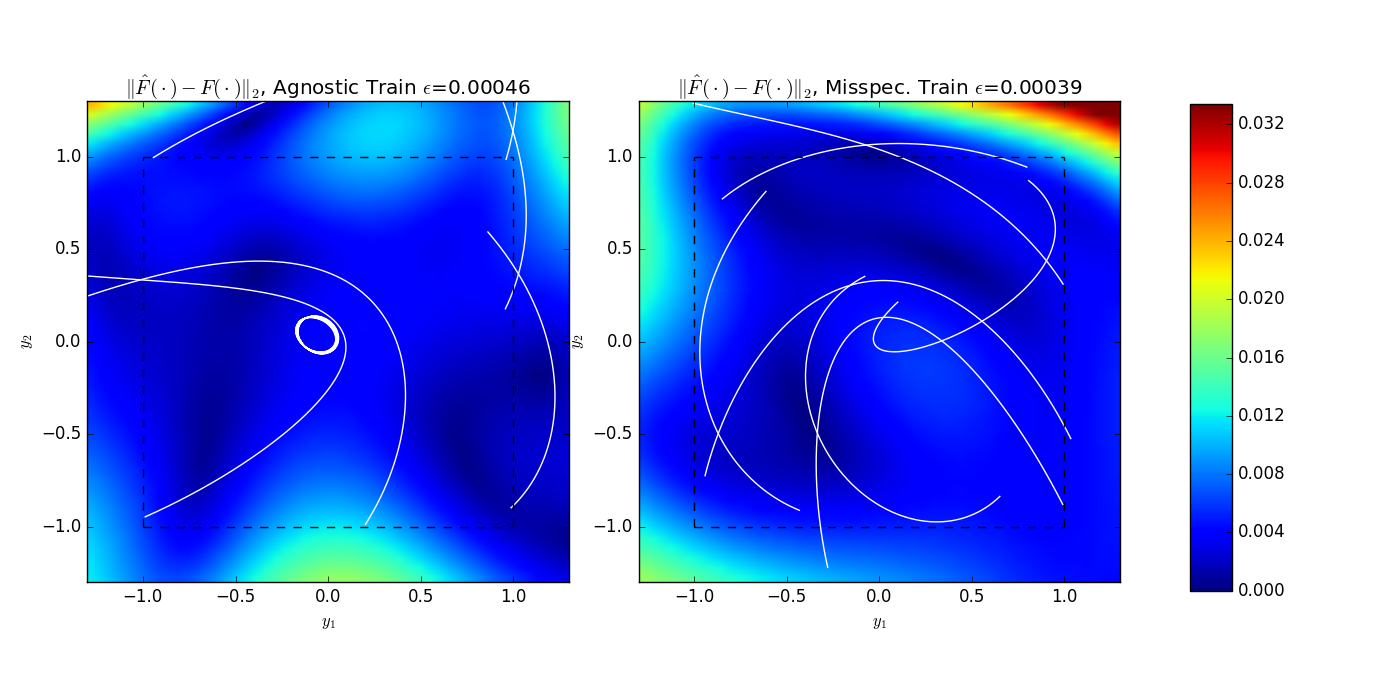}}
\caption{Estimation error map for $\hat{\textbf{F}}_{3,4}(\cdot)$ for an agnostic choice of training set (left) and a misspecified aided design (right)} \label{fig:gravity_error}
\end{figure*}
\begin{figure*}
\makebox[\textwidth][c]{\includegraphics[width=0.8\textwidth]{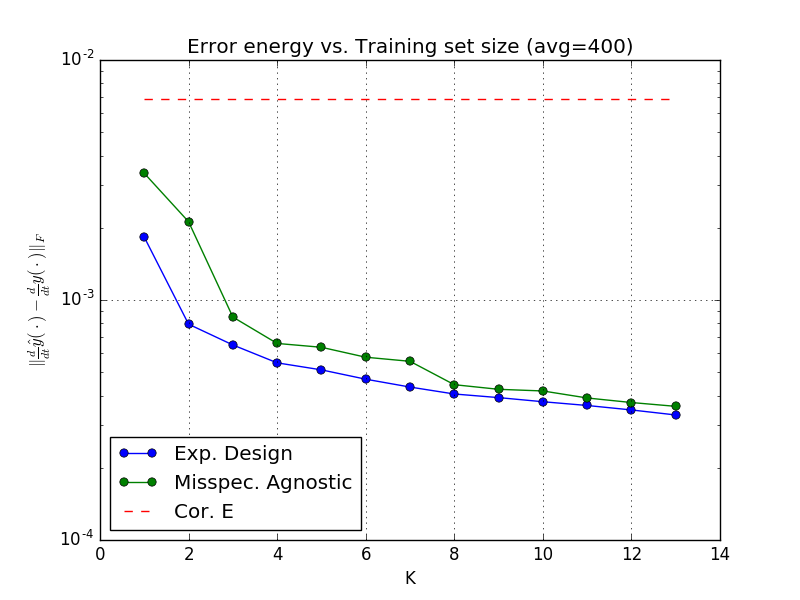}}
\caption{Average estimation error vs. training set size.} \label{fig:gravity_comparison}
\end{figure*}
	\section{Conclusions and Future Extensions}\label{sec:conclusions-and-future-extensions}
We have introduced a flexible Gaussian Process based formalism for expressing misspecified models for dynamical systems, and a corresponding technique for making inference and learning the misspecified dynamics based on empirical data collected from system evolution sequences. We formulated a corresponding optimal experimental design problem as one of choosing informative initial conditions that facilitate rapid learning of the system, and suggested an efficient algorithm with guarantees to find approximate such designs under an experimental budget constraint. 

Several aspects of our work may be extended. We leave the following ideas and directions for future research. In this study, we have assumed that empirical data is collected only after experimental design has been performed. However, in various configurations, it is possible to consider an online adaptive experimental design formulation, where sequential predictions are made based on past observations. While one can consider a setting in which the aforementioned design process is being re-executed following each observation (with updated knowledge), such approach may be sub-optimal. Recent studies have been considering approaches such as dynamic programming in the context of Bayesian optimization, to devise experimental design in a less myopic fashion \cite{poloczek2016multi,lam2016bayesian}. On another matter, in the current study, the design space involved a discrete lattice of prospective seed coordinates (initial conditions starting points). Alternative, spatially continuous parametrization of the seeding points, may be more appropriate in other circumstances, and may enable harnessing scalable, continuous optimization strategies for determination of the initial states. While we attempted to generalize the functional form of the correction model by the utilization of 
a Gaussian Process as a generic form of model correction, the overall relationship of the correction term to the misspecified model is still in the form of an additive term. This popular choice may be appropriate for a broad range of applications, but obviously, for others, more sophisticated forms should be considered. 

In this study, we have focused our attention at the link of sub-modularity and the mutual information measure. In future studies it would be beneficial to explore the relation between sub-modularity and other inference performance measures. Additionally we leave for future research full consideration of the measurement error in dynamical systems state variables for enhancing the modeling power of our formulations.
Lastly, from a computational standpoint, the incorporation of efficient, random features based  methods \cite{rahimi2007random,HaimSto} for accelerated predictions over the corrected system, would enable scalability of the approach towards complex large-scale problems.
 

	\bibliographystyle{siamplain}
	\bibliography{Dynamic_mis_model_bib}

\begin{thebibliography}{10}

\bibitem{abascal2008use}
{\sc J.-F.~P. Abascal, S.~R. Arridge, D.~Atkinson, R.~Horesh, L.~Fabrizi,
  M.~De~Lucia, L.~Horesh, R.~H. Bayford, and D.~S. Holder}, {\em Use of
  anisotropic modelling in electrical impedance tomography; description of
  method and preliminary assessment of utility in imaging brain function in the
  adult human head}, Neuroimage, 43 (2008), pp.~258--268.

\bibitem{alexandrov2001approximation}
{\sc N.~M. Alexandrov, R.~M. Lewis, C.~R. Gumbert, L.~L. Green, and P.~A.
  Newman}, {\em Approximation and model management in aerodynamic optimization
  with variable-fidelity models}, Journal of Aircraft, 38 (2001),
  pp.~1093--1101.

\bibitem{alvarez2012kernels}
{\sc M.~A. {\'A}lvarez, L.~Rosasco, N.~D. Lawrence, et~al.}, {\em Kernels for
  vector-valued functions: A review}, Foundations and Trends in Machine
  Learning, 4 (2012), pp.~195--266.

\bibitem{antoulas2005approximation}
{\sc A.~Antoulas}, {\em Approximation of Large-Scale Dynamical Systems}, SIAM,
  Philadelphia, PA, 2005.

\bibitem{aronszajn1950theory}
{\sc N.~Aronszajn}, {\em Theory of reproducing kernels}, Transactions of the
  American Mathematical Society, 68 (1950), pp.~337--404.

\bibitem{avron2016krr}
{\sc H.~Avron, K.~L. Clarkson, and D.~P. Woodruff}, {\em Faster kernel ridge
  regression using sketching and preconditioning}, CoRR, abs/1611.03220 (2016),
  \url{http://arxiv.org/abs/1611.03220}.

\bibitem{avron2016high}
{\sc H.~Avron and V.~Sindhwani}, {\em High-performance kernel machines with
  implicit distributed optimization and randomization}, Technometrics, 58
  (2016), pp.~341--349.

\bibitem{HaimSto}
{\sc H.~Avron and S.~Toledo}, {\em Randomized algorithms for estimating the
  trace of an implicit symmetric positive semi-definite matrix}, Journal of the
  ACM, 58 (2011).

\bibitem{berlinet2011reproducing}
{\sc A.~Berlinet and C.~Thomas-Agnan}, {\em Reproducing kernel Hilbert spaces
  in probability and statistics}, Springer Science \& Business Media, 2011.

\bibitem{bernardo1979expected}
{\sc J.~M. Bernardo}, {\em Expected information as expected utility}, The
  Annals of Statistics,  (1979), pp.~686--690.

\bibitem{blanco2010assessing}
{\sc P.~Blanco, S.~Urquiza, and R.~Feij{\'o}o}, {\em Assessing the influence of
  heart rate in local hemodynamics through coupled 3d-1d-0d models},
  International Journal for Numerical Methods in Biomedical Engineering, 26
  (2010), pp.~890--903.

\bibitem{blight1975bayesian}
{\sc B.~Blight and L.~Ott}, {\em A bayesian approach to model inadequacy for
  polynomial regression}, Biometrika, 62 (1975), pp.~79--88.

\bibitem{box1978bayesian}
{\sc G.~E. Box, W.~G. Hunter, and S.~J. Hunter}, {\em Statistics for
  Experimenters}, John Wiley \& Sons, 1978.

\bibitem{box2011bayesian}
{\sc G.~E. Box and G.~C. Tiao}, {\em Bayesian inference in statistical
  analysis}, vol.~40, John Wiley \& Sons, 2011.

\bibitem{calinescu2007maximizing}
{\sc G.~Calinescu, C.~Chekuri, M.~P{\'a}l, and J.~Vondr{\'a}k}, {\em Maximizing
  a submodular set function subject to a matroid constraint}, in Integer
  programming and combinatorial optimization, Springer, 2007, pp.~182--196.

\bibitem{chaloner1995bayesian}
{\sc K.~Chaloner and I.~Verdinelli}, {\em Bayesian experimental design: A
  review}, Statistical Science,  (1995), pp.~273--304.

\bibitem{chase2005modeling}
{\sc D.~R. Chase, L.-Y. Chen, and R.~A. York}, {\em Modeling the capacitive
  nonlinearity in thin-film bst varactors}, IEEE transactions on microwave
  theory and techniques, 53 (2005), pp.~3215--3220.

\bibitem{chaturantabut2010nonlinear}
{\sc S.~Chaturantabut and D.~C. Sorensen}, {\em Nonlinear model reduction via
  discrete empirical interpolation}, SIAM Journal on Scientific Computing, 32
  (2010), pp.~2737--2764.

\bibitem{conn2009introduction}
{\sc A.~R. Conn, K.~Scheinberg, and L.~N. Vicente}, {\em Introduction to
  derivative-free optimization}, SIAM, 2009.

\bibitem{cox2006two}
{\sc B.~T. Cox, S.~R. Arridge, K.~P. K{\"o}stli, and P.~C. Beard}, {\em
  Two-dimensional quantitative photoacoustic image reconstruction of absorption
  distributions in scattering media by use of a simple iterative method},
  Applied Optics, 45 (2006), pp.~1866--1875.

\bibitem{de2013extension}
{\sc E.~De~Vito, V.~Umanit{\`a}, and S.~Villa}, {\em An extension of mercer
  theorem to matrix-valued measurable kernels}, Applied and Computational
  Harmonic Analysis, 34 (2013), pp.~339--351.

\bibitem{DONEA1982689}
{\sc J.~Donea, S.~Giuliani, and J.~Halleux}, {\em An arbitrary
  lagrangian-eulerian finite element method for transient dynamic
  fluid-structure interactions}, Computer Methods in Applied Mechanics and
  Engineering, 33 (1982), pp.~689 -- 723,
  \href{http://dx.doi.org/http://dx.doi.org/10.1016/0045-7825(82)90128-1}
  {doi:http://dx.doi.org/10.1016/0045-7825(82)90128-1},
  \url{http://www.sciencedirect.com/science/article/pii/0045782582901281}.

\bibitem{fatoorehchi2015analytical}
{\sc H.~Fatoorehchi, H.~Abolghasemi, and R.~Zarghami}, {\em Analytical
  approximate solutions for a general nonlinear resistor--nonlinear capacitor
  circuit model}, Applied Mathematical Modelling, 39 (2015), pp.~6021--6031.

\bibitem{fedorov1972theory}
{\sc V.~V. Fedorov}, {\em Theory of optimal experiments}, Elsevier, 1972.

\bibitem{feige1998threshold}
{\sc U.~Feige}, {\em A threshold of ln n for approximating set cover}, Journal
  of the ACM (JACM), 45 (1998), pp.~634--652.

\bibitem{doi:10.1137/S003614290038296X}
{\sc L.~Formaggia, J.~F. Gerbeau, F.~Nobile, and A.~Quarteroni}, {\em Numerical
  treatment of defective boundary conditions for the navier--stokes equations},
  SIAM Journal on Numerical Analysis, 40 (2002), pp.~376--401,
  \href{http://dx.doi.org/10.1137/S003614290038296X}
  {doi:10.1137/S003614290038296X},
  \url{http://dx.doi.org/10.1137/S003614290038296X},
  \href{http://arxiv.org/abs/http://dx.doi.org/10.1137/S003614290038296X}
  {arXiv:http://dx.doi.org/10.1137/S003614290038296X}.

\bibitem{fujishige2005submodular}
{\sc S.~Fujishige}, {\em Submodular functions and optimization}, vol.~58,
  Elsevier, 2005.

\bibitem{gear1971numerical}
{\sc C.~W. Gear}, {\em Numerical initial value problems in ordinary
  differential equations}, Prentice Hall PTR, 1971.

\bibitem{goh2012prediction}
{\sc J.~Goh, D.~Bingham, J.~P. Holloway, M.~J. Grosskopf, C.~C. Kuranz, and
  E.~Rutter}, {\em Prediction and computer model calibration using outputs from
  multi-fidelity simulators}, arXiv preprint arXiv:1208.2716,  (2012).

\bibitem{goldstein1965classical}
{\sc H.~Goldstein}, {\em Classical mechanics}, Pearson Education India, 1965.

\bibitem{haber2000fast}
{\sc E.~Haber, U.~Ascher, D.~Aruliah, and D.~Oldenburg}, {\em Fast simulation
  of 3d electromagnetic problems using potentials}, Journal of Computational
  Physics, 163 (2000), pp.~150--171.

\bibitem{haber2008numerical}
{\sc E.~Haber, L.~Horesh, and L.~Tenorio}, {\em Numerical methods for
  experimental design of large-scale linear ill-posed inverse problems},
  Inverse Problems, 24 (2008), p.~055012.

\bibitem{hao2014nuclear}
{\sc N.~Hao, L.~Horesh, and M.~Kilmer}, {\em Nuclear norm optimization and its
  application to observation model specification}, in Compressed Sensing \&
  Sparse Filtering, Springer, 2014, pp.~95--122.

\bibitem{hastie2009elements}
{\sc T.~Hastie, R.~Tibshirani, J.~Friedman, T.~Hastie, J.~Friedman, and
  R.~Tibshirani}, {\em The Elements of Statistical Learning}, vol.~2, Springer,
  Dordrecht, Germany, 2009.

\bibitem{he1997lattice}
{\sc X.~He and L.-S. Luo}, {\em Lattice boltzmann model for the incompressible
  navier--stokes equation}, Journal of statistical Physics, 88 (1997),
  pp.~927--944.

\bibitem{helms1962truncation}
{\sc H.~Helms and J.~Thomas}, {\em Truncation error of sampling-theorem
  expansions}, Proceedings of the IRE, 50 (1962), pp.~179--184.

\bibitem{horesh2010optimal}
{\sc L.~Horesh, E.~Haber, and L.~Tenorio}, {\em Optimal experimental design for
  the large-scale nonlinear ill-posed problem of impedance imaging},
  Large-Scale Inverse Problems and Quantification of Uncertainty,  (2010),
  pp.~273--290.

\bibitem{horesh2015tnobel}
{\sc L.~Horesh, L.~Liberti, H.~Avron, and D.~Nahamoo}, {\em Globally convergent
  system and method for automated model discovery}, Apr.~6 2015.
\newblock US Patent Office Serial Number 14/755,942.

\bibitem{jagerman1966bounds}
{\sc D.~Jagerman}, {\em Bounds for truncation error of the sampling expansion},
  SIAM Journal on Applied Mathematics, 14 (1966), pp.~714--723.

\bibitem{kennedy2000predicting}
{\sc M.~Kennedy and A.~O'Hagan}, {\em Predicting the output from a complex
  computer code when fast approximations are available}, Biometrika, 87 (2000),
  pp.~1--13.

\bibitem{kennedy2001bayesian}
{\sc M.~C. Kennedy and A.~O'Hagan}, {\em Bayesian calibration of computer
  models}, Journal of the Royal Statistical Society: Series B (Statistical
  Methodology), 63 (2001), pp.~425--464.

\bibitem{krause2012submodular}
{\sc A.~Krause and D.~Golovin}, {\em Submodular function maximization},
  Tractability: Practical Approaches to Hard Problems, 3 (2012), p.~19.

\bibitem{krause2012near}
{\sc A.~Krause and C.~E. Guestrin}, {\em Near-optimal nonmyopic value of
  information in graphical models}, Proceedings of the Twenty-First Conference
  on Uncertainty in Artificial Intelligence,  (2012).

\bibitem{kutoyants2012identification}
{\sc Y.~A. Kutoyants}, {\em Identification of dynamical systems with small
  noise}, vol.~300, Springer Science \& Business Media, 2012.

\bibitem{lam2016copper}
{\sc R.~Lam, L.~Horesh, H.~Avron, and K.~Willcox}, {\em An optimization
  framework for hybrid first principles data-driven modeling}, in Copper
  Mountain Conference on Iterative Methods, 2016.

\bibitem{lam2016bayesian}
{\sc R.~Lam, K.~Willcox, and D.~H. Wolpert}, {\em Bayesian optimization with a
  finite budget: An approximate dynamic programming approach}, in Advances in
  Neural Information Processing Systems, 2016, pp.~883--891.

\bibitem{leary2003knowledge}
{\sc S.~J. Leary, A.~Bhaskar, and A.~J. Keane}, {\em A knowledge-based approach
  to response surface modelling in multifidelity optimization}, Journal of
  Global Optimization, 26 (2003), pp.~297--319.

\bibitem{lieberman2013goal}
{\sc C.~Lieberman and K.~Willcox}, {\em Goal-oriented inference: Approach,
  linear theory, and application to advection diffusion}, SIAM Review, 55
  (2013), pp.~493--519.

\bibitem{march2012multidelity}
{\sc A.~I. March}, {\em Multidelity methods for multidisciplinary system
  design}, PhD thesis, Massachusetts Institute of Technology, MA, June 2012.

\bibitem{martinez2005transformer}
{\sc J.~A. Martinez and B.~A. Mork}, {\em Transformer modeling for low-and
  mid-frequency transients-a review}, IEEE Transactions on Power Delivery, 20
  (2005), pp.~1625--1632.

\bibitem{mcgoff2015statistical}
{\sc K.~McGoff, S.~Mukherjee, N.~Pillai, et~al.}, {\em Statistical inference
  for dynamical systems: A review}, Statistics Surveys, 9 (2015), pp.~209--252.

\bibitem{micchelli2006universal}
{\sc C.~A. Micchelli, Y.~Xu, and H.~Zhang}, {\em Universal kernels}, Journal of
  Machine Learning Research, 7 (2006), pp.~2651--2667.

\bibitem{minoux1978accelerated}
{\sc M.~Minoux}, {\em Accelerated greedy algorithms for maximizing submodular
  set functions}, in Optimization Techniques, Springer, 1978, pp.~234--243.

\bibitem{muller1968mascons}
{\sc P.~M. Muller and W.~L. Sjogren}, {\em Mascons: Lunar mass concentrations},
  Science, 161 (1968), pp.~680--684.

\bibitem{nemhauser1978analysis}
{\sc G.~L. Nemhauser, L.~A. Wolsey, and M.~L. Fisher}, {\em An analysis of
  approximations for maximizing submodular set functions}, Mathematical
  Programming, 14 (1978), pp.~265--294.

\bibitem{oxley2006matroid}
{\sc J.~G. Oxley}, {\em Matroid theory}, vol.~3, Oxford University Press, USA,
  2006.

\bibitem{Peherstorfer201521}
{\sc B.~Peherstorfer and K.~Willcox}, {\em Dynamic data-driven reduced-order
  models}, Computer Methods in Applied Mechanics and Engineering, 291 (2015),
  pp.~21 -- 41,
  \href{http://dx.doi.org/http://dx.doi.org/10.1016/j.cma.2015.03.018}
  {doi:http://dx.doi.org/10.1016/j.cma.2015.03.018},
  \url{http://www.sciencedirect.com/science/article/pii/S0045782515001280}.

\bibitem{poloczek2016multi}
{\sc M.~Poloczek, J.~Wang, and P.~I. Frazier}, {\em Multi-information source
  optimization}, arXiv preprint arXiv:1603.00389,  (2016).

\bibitem{pukelsheim1993optimal}
{\sc F.~Pukelsheim}, {\em Optimal design of experiments}, vol.~50, siam, 1993.

\bibitem{rahimi2007random}
{\sc A.~Rahimi and B.~Recht}, {\em Random features for large-scale kernel
  machines}, in Advances in neural information processing systems, 2007,
  pp.~1177--1184.

\bibitem{rajnarayan2009trading}
{\sc D.~Rajnarayan}, {\em Trading risk and performance for engineering design
  optimization using multifidelity analyses}, PhD thesis, Stanford University,
  CA, June 2009.

\bibitem{rasmussen2006gaussian}
{\sc C.~E. Rasmussen}, {\em Gaussian processes for machine learning},  (2006).

\bibitem{doi:10.1063/1.322296}
{\sc A.~Rosencwaig and A.~Gersho}, {\em Theory of the photoacoustic effect with
  solids}, Journal of Applied Physics, 47 (1976), pp.~64--69,
  \href{http://dx.doi.org/10.1063/1.322296} {doi:10.1063/1.322296},
  \url{http://dx.doi.org/10.1063/1.322296},
  \href{http://arxiv.org/abs/http://dx.doi.org/10.1063/1.322296}
  {arXiv:http://dx.doi.org/10.1063/1.322296}.

\bibitem{rossi1999orbital}
{\sc A.~Rossi, F.~Marzari, and P.~Farinella}, {\em Orbital evolution around
  irregular bodies}, Earth, planets and space, 51 (1999), pp.~1173--1180.

\bibitem{rosson2016orbital}
{\sc Z.~Rosson, F.~Hall, and T.~Vogel}, {\em Orbital behavior around a
  nonuniform celestial body}, in Journal of Physics: Conference Series,
  vol.~750, IOP Publishing, 2016, p.~012022.

\bibitem{scholkopf2001generalized}
{\sc B.~Sch{\"o}lkopf, R.~Herbrich, and A.~J. Smola}, {\em A generalized
  representer theorem}, in International Conference on Computational Learning
  Theory, Springer, 2001, pp.~416--426.

\bibitem{shalizi2009dynamics}
{\sc C.~R. Shalizi et~al.}, {\em Dynamics of bayesian updating with dependent
  data and misspecified models}, Electronic Journal of Statistics, 3 (2009),
  pp.~1039--1074.

\bibitem{soize2008probabilistic}
{\sc C.~Soize, E.~Capiez-Lernout, J.-F. Durand, C.~Fernandez, and
  L.~Gagliardini}, {\em Probabilistic model identification of uncertainties in
  computational models for dynamical systems and experimental validation},
  Computer Methods in Applied Mechanics and Engineering, 198 (2008),
  pp.~150--163.

\bibitem{tenorio2013experimental}
{\sc L.~Tenorio, C.~Lucero, V.~Ball, and L.~Horesh}, {\em Experimental design
  in the context of {T}ikhonov regularized inverse problems}, Statistical
  Modelling, 13 (2013), pp.~481--507.

\bibitem{thomsen1986weak}
{\sc L.~Thomsen}, {\em Weak elastic anisotropy}, Geophysics, 51 (1986),
  pp.~1954--1966.

\bibitem{tinsley2006multiscale}
{\sc J.~Tinsley~Oden, S.~Prudhomme, A.~Romkes, and P.~T. Bauman}, {\em
  Multiscale modeling of physical phenomena: Adaptive control of models}, SIAM
  Journal on Scientific Computing, 28 (2006), pp.~2359--2389.

\bibitem{tizzard2005generating}
{\sc A.~Tizzard, L.~Horesh, R.~J. Yerworth, D.~S. Holder, and R.~Bayford}, {\em
  Generating accurate finite element meshes for the forward model of the human
  head in eit}, Physiological measurement, 26 (2005), p.~S251.

\bibitem{trefethen2013approximation}
{\sc L.~N. Trefethen}, {\em Approximation theory and approximation practice},
  Siam, 2013.

\bibitem{Veneziani2005FlowRD}
{\sc A.~Veneziani and C.~Vergara}, {\em Flow rate defective boundary conditions
  in haemodynamics simulations}, 2005.

\bibitem{virieux2009overview}
{\sc J.~Virieux and S.~Operto}, {\em An overview of full-waveform inversion in
  exploration geophysics}, Geophysics, 74 (2009), pp.~WCC1--WCC26.

\bibitem{wei1995comparative}
{\sc D.~Wei, O.~Okazaki, K.~Harumi, E.~Harasawa, and H.~Hosaka}, {\em
  Comparative simulation of excitation and body surface electrocardiogram with
  isotropic and anisotropic computer heart models}, IEEE Transactions on
  biomedical engineering, 42 (1995), pp.~343--357.

\bibitem{weideman1994computation}
{\sc J.~A.~C. Weideman}, {\em Computation of the complex error function}, SIAM
  Journal on Numerical Analysis, 31 (1994), pp.~1497--1518.

\bibitem{williams1998prediction}
{\sc C.~K. Williams}, {\em Prediction with gaussian processes: From linear
  regression to linear prediction and beyond}, in Learning in graphical models,
  Springer, 1998, pp.~599--621.

\bibitem{zunino2004multidimensional}
{\sc P.~Zunino}, {\em Multidimensional pharmacokinetic models applied to the
  design of drug-eluting stents}, Cardiovascular Engineering: An International
  Journal, 4 (2004), pp.~181--191.

\end{thebibliography}
\end{document}